\def\eqref#1{equation~\ref{#1}}
\def\1{\bm{1}}
\DeclareMathAlphabet{\mathsfit}{\encodingdefault}{\sfdefault}{m}{sl}
\SetMathAlphabet{\mathsfit}{bold}{\encodingdefault}{\sfdefault}{bx}{n}
\definecolor{MyBrown}{rgb}{0.3,0,0}
\definecolor{MyBlue}{rgb}{0,0,1}
\definecolor{MyRed}{rgb}{0.5,0,0}
\definecolor{MyGreen}{rgb}{0,0.4,0}
\def\u{{\bf{u}}}
\def\u{{\bm u}}
\def\0{{\bm 0}}
\def\1{{\bm 1}}
\newtheorem{thm} {Theorem}
\newtheorem{lem}[thm] {Lemma}
\newtheorem{prp}[thm] {Proposition}
\newtheorem{df}[thm]{Definition}
\newtheorem{conjecture}{Conjecture}
\title{Equivariant and Invariant Reynolds Networks}
\author{Akiyoshi Sannai \\
RIKEN AIP \\
Tokyo, Japan \\
\texttt{akiyoshi.sannai@riken.jp} \\
\And
Makoto Kawano \\
The University of Tokyo\\
Tokyo, Japan \\
\texttt{kawano@weblab.t.u-tokyo.ac.jp} \\
\And
Wataru Kumagai \\
The University of Tokyo, RIKEN AIP\\
Tokyo, Japan \\
\texttt{kumagai@weblab.t.u-tokyo.ac.jp} \\
}
\begin{document}

\maketitle
\begin{abstract}
Invariant and equivariant networks are useful in learning data with symmetry, including images, sets, point clouds, and graphs.
In this paper, we consider invariant and equivariant networks for symmetries of finite groups.
 Invariant and equivariant networks have been constructed by various researchers using Reynolds operators. 
However, Reynolds operators are computationally expensive when the order of the group is large because they use the sum over the whole group, which poses an implementation difficulty.
To overcome this difficulty, we consider representing the Reynolds operator as a sum over a subset instead of a sum over the whole group.
We call such a subset a Reynolds design, and an operator defined by a sum over a Reynolds design a reductive Reynolds operator.
For example, in the case of a graph with $n$ nodes, 
the computational complexity of the reductive Reynolds operator is reduced to $O(n^2)$, 
while the computational complexity of the Reynolds operator is $O(n!)$.
We construct learning models based on the reductive Reynolds operator called equivariant and invariant Reynolds networks (ReyNets) and prove that they have universal approximation property.
Reynolds designs for equivariant ReyNets are derived from combinatorial observations with Young diagrams, while Reynolds designs for invariant ReyNets are derived from invariants called Reynolds dimensions defined on the set of invariant polynomials.
Numerical experiments show that the performance of our models is comparable to state-of-the-art methods.

\end{abstract}

\section{Introduction}

The universal approximation theorem in machine learning states that any continuous function can be approximated by a deep neural network, but in a practical situation, there are many cases where learning fails.
It is believed that this is because the number of parameters used is too large to learn an appropriate model from a hypotheses set.
Therefore, researchers developed task-specific models, such as convolutional networks for image input, and obtained better results.
On the other hand, \citet{zaheer2017deep} developed a model using permutation invariance and permutation equivariance, and obtained good experimental results as well as theoretical development.
Such a model was generalized by  \citet{maron2019universality}  for subgroups of permutation groups.
In this paper, we develope an equivariant and invariant deep neural network model for the action of permutation groups on higher-order tensor spaces.
An important application of this formulation is the task of using graphs and hypergraphs as input.

The central part of our idea is to use the equivariant Reynolds operator. 
The equivariant Reynolds operator has the property of transforming a function into an equivariant function, and by adapting this operator to a deep neural nets, we obtain a class of equivariant neural networks. 
Such attempts have been made in the past, for example by \citet{yarotsky2021universal,kicki2021new,van2020mdp,mouli2020neural}, but the use of Reynolds operator causes computational problems when the order of the groups is large in the part that computes the equivariant Reynolds operator.

Therefore, we introduce the concept of Reynolds design, which is an analog of spherical design.
Here, a $t$-spherical design is a finite subset $H\subset \mathbb{S}^{d-1}$ such that any polynomial $f$ with degree at most $t$ satisfies
\begin{align}
    \frac{1}{|\mathbb{S}^{d-1}|}\int_{\mathbb{S}^{d-1}} f(x) dx
    = \frac{1}{|H|} \sum_{x \in H} f(x).
    \label{eq:spherical-design}
\end{align}
Using the spherical design,
the computation over the whole of $\mathbb{S}^{d-1}$ can be reduced to that over the small subset $H$.
Similarly, a Reynolds design is a subset $H$ of a group $G$ that realizes 
\begin{align*}
    \frac{1}{|G|}\sum_{g\in G} f(g\cdot x)
    = \frac{1}{|H|}\sum_{g\in H} f(g\cdot x)
\end{align*}
for a target function $f$, where $g\cdot x$ is the action of $g$ to $x$.
We call the operator defined on the right-hand side the reductive Reynolds operator, and use this operator instead of the Reynolds operator to construct equivariant and invariant networks. 
The advantage of this construction is that it preserves the universal approximation property of the construction using Reynolds operators.
Also, in many cases, the Reynolds design can be much smaller than the original $G$. 
To formulate this fact, we prove a representation theorem for equivariant functions using the relation between Young diagrams and the representation of symmetric groups on higher-order tensor spaces (Theorem \ref{represention}).
For example, in the case of graphs with $n$ nodes, the order of the group is $n!$ but we have shown that there exist Reynolds designs of order $n(n-1)$.
Thus, by using Reynolds designs, we can avoid the problem of computational complexity.

As for the invariant network, as in the case of CNN, the input is first transformed by a equivariant network, and after taking the sum of the orbits, it is transformed by a deep neural network.
Note that we do not know how to take the Reynolds design, because the above representation theorem is not proved with invariant functions.
To solve this problem we introduce an invariant called Reynolds dimension. Briefly, the Reynolds dimension is the number of variables required for the input of the functions before they are converted by the Reynolds operator, when the model had universality in the above construction. It is not clear what the Reynolds dimension is when taking a higher order tensor space as input, but our expectation is that it is independent of the dimension $n$. We prove that we can construct a Reynolds design when the Reynolds dimension is fixed. We also find that all input variables are not necessary when the Reynolds dimension is low, therefore we construct networks with reduced inputs (reduced ReyNets). 

Using these models, we conduct experiments on synthetic data.
In addition, as an implementation, we use the above representation theorem to develop the Corner MSE, which restricts the loss function for an equivariant function to two components.
As a result, our method significantly outperform the baseline of FNN and Maron's model, confirming its consistency with the theoretical part.

\section{Previous Work}

\textbf{Equivariance and Invariance}.
Various machine learning tasks aims to approximate a certain target map such as the labeling function in classification and regression.
When symmetries exist behind data, 
the target map often have invariance or equivariance to the symmetries.
In such cases, invariant or equivariant networks are effective and efficient to approximate the target map because the model complexity can be significantly reduced than neural networks without specific structure for the symmetries.
Convolutional neural networks (CNNs) are well-known as a seminal equivariant model to translation symmetry (\citet{lecun1989backpropagation}; \citet{krizhevsky2012imagenet}).
Inspired by the success of CNNs, 
various equivariant models have been proposed. 
%
%
Besides continuous symmetries such as translation, 
symmetries of finite groups often appear in many machine learning tasks.
In the case where sets or point clouds are inputs,
the target functions are typically invariant to the order of data points.
Then, this function has invariance to the permutation group on data points (\citet{qi2017pointnet,zaheer2017deep}).
In the case where graphs or hyper-graphs are inputs,
the symmetry is represented by permutation on a tensor product space.
\citet{gori2005new,scarselli2008graph} introduced graph neural networks. 
Various researchers generalized convolution to the setting of graphs motivated by CNNs (\citet{bruna2013spectral,henaff2015deep,kipf2016semi,defferrard2016convolutional,levie2018cayleynets}).
Recently, \citet{kondor2018covariant,maron2019provably,maron2019universality, chen2019equivalence} investigate more efficient graph neural networks.
%
\citet{ravanbakhsh2017equivariance} treated equivariant and invariant models to subgroups of the symmetric group.
\citet{hartford2018deep} consider interaction between sets.
\citet{graham2019deep} consider relational databases as generalization of graphs and provide equivariant models to handle relational databases.
\cite{maron2020learning} consider sets of symmetric elements as input data.

\textbf{Universality}.
The expressive power of learning models is mathematically validated by universal approximation theorems.
Many universal approximation theorems have been proved for different conditions.
Equivariant and invariant models with universal approximation property are provided for pointclouds networks and sets network (\citet{qi2017pointnet,zaheer2017deep}), 
tabular and multi-set networks (\citet{hartford2018deep}), graph and hyper-graph networks (\citet{kondor2018covariant,maron2018invariant,nguyen2020graph}), 
and networks invariant to finite translations with rotations and/or reflections \cite{maron2019universality}. 
On the other hand, \cite{oono2019graph} proved that graph neural networks of aggregation type are over smoothing when many layers are stacked.
There are also learning models with universality in other settings (\citet{yarotsky2021universal,keriven2019universal,maehara2019simple,segol2019universal}).

%

\section{Reynolds Operator and Reynolds Designs}


In this section, we organize the mathematical framework for dealing with symmetric networks. The first objects to be considered are the following equivariant and invariant functions.

\begin{df}[Invariant / Equivariant Function] For a group $G$ acting on $\mathbb{R}^{N}$ and $\mathbb{R}^{M}$, a function 
$f: \mathbb{R}^{N} \rightarrow \mathbb{R}^{M}$ is invariant if $f(g \cdot x)=f(x)$ holds for any $g \in G$ and any $x \in \mathbb{R}^{N}$, and  equivariant if $f(g \cdot x)=g \cdot f(x)$ holds for any $g \in G$ and any $x \in \mathbb{R}^{N}$.
\end{df}

The next operator, called the Reynolds operator, plays a central role in this paper.

\begin{df}[Reynolds Operator (cf. \cite{mumford1994geometric}, Definition 1.5)]
For a group $G$,
the followings are called the equivariant and invariant Reynolds operator respectively:
\begin{align}
\tau_{G}(f(-))=\frac{1}{|G|} \sum_{g\in G} g^{-1} \cdot f(g \cdot-), 
\quad\gamma_{G}(f(-))=\frac{1}{|G|} \sum_{g\in G} f(g \cdot-).
\label{eq:equiv}
\end{align}
\end{df}

More generally,
for a subset $H$ in $G$\footnote{The subset $H$ in $G$ may not be a subgroup of $G$.},
we define $\tau_H$ and $\gamma_H$ by replacing $G$ by $H$ in (\ref{eq:equiv}). 

The equivariant Reynolds operator converts an arbitrary map to a equivariant map. Our idea is to use this operator to convert a fully connected deep neural network into an equivariant function. 
However, as we see in a moment, the computational complexity of Reynolds operators increases when the order of $G$ is large, for example, for symmetric groups of order $n!$.


Design theory in mathematics provides a suitable subset that represents some property about the whole.
Here, we explain spherical design.
Let $\mathcal{P}_t$ be the set of all polynomials of at most degree $t\in\mathbb{N}$ on $\mathbb{R}^d$. 
Then, there exists a finite subset $H$ of the sphere $\mathbb{S}^{d-1}\subset \mathbb{R}^d$ such that an arbitrary polynomial $p\in \mathcal{P}_t$ satisfies (\ref{eq:spherical-design}).
Then, such a subset $H$ is called the $t$-spherical design.

Therefore, we define the following notion of Reynolds design to reduce the computational complexity. 
The concept of Reynolds design is first proposed in this paper.

\begin{df}[Reynolds Design]
The Reynolds design $H$ of a function $f$ is a subset $H$ of $G$ that satisfies the following equation.

\begin{align*}
\tau_G(f(-))=\frac{1}{|G|} \sum_{g \in G} g^{-1} \cdot f(g \cdot-)=\frac{1}{|H|} \sum_{g \in H} g^{-1} \cdot f(g \cdot-).
\end{align*}
In this case, $\tau_H$ is called the reductive Reynolds operator.Furthermore, the Reynolds design $H$ of a set of functions $\mathcal{F}$ is defined by the property that the above equation for $H$ holds for all $f \in \mathcal{F}$.

\end{df}

As an example, consider a power of one variable as $f$. Since $\tau_{S_{n}}\left(x_{1}^{i}\right)=x_{1}^{i}+\cdots+x_{n}^{i}$, the number of terms in the summation is only $n$, which can be written as an orbit by the cyclic group $C_n$ of order $n$ as 
$\tau_{S_{n}}\left(x_{1}^{i}\right)=x_{1}^{i}+\cdots+x_{n}^{i}=\tau_{C_{n}}\left(x_{1}^{i}\right)$. 
Hence $C_n$ is the Reynolds design of $x_1^i$.

\section{Representation Theorem for Equivariant Maps}




In this section, we consider the case where the symmetric group $S_n$ acts on the higher order tensor space $\mathbb{R}^{n^{\ell} \times a}$. 
The action of $S_n$ on tensors $\mathbf{X} \in \mathbb{R}^{n^{l} \times a}$ (the last index, denoted $\alpha$ represents feature depth) is defined by  $(g \cdot \mathbf{X})_{i_{1} \ldots i_{l}, \alpha}=\mathbf{X}_{g^{-1}\left(i_{1}\right) \ldots g^{-1}\left(i_{l}\right), \alpha}$.
Note that this is an important class for applications because it deals with the case where the input is a set, graph, etc. 
\begin{figure*}[!t]
\centering
    \includegraphics[bb=0 0 1150 450, 
        scale=0.25]{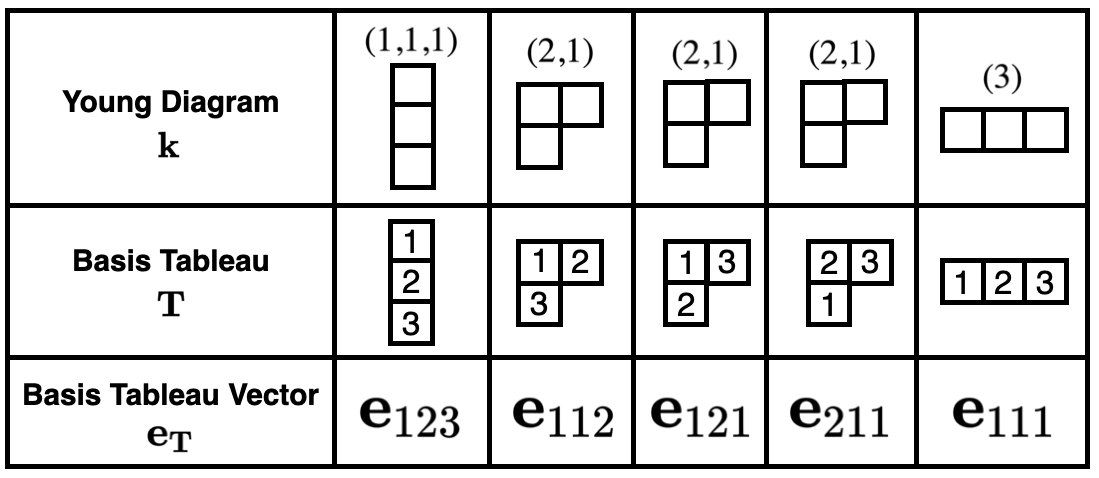}
\caption{
Young diagrams and basis tableaux for $m=3$, and the corresponding basis tableau vectors in $\mathbb{R}^{n^3}= \mathbb{R}^{n}\otimes\mathbb{R}^{n}\otimes\mathbb{R}^{n}$.
}
\label{fig:equivariance}
\end{figure*}

A Young diagram is a way to represent the division of a natural number $m$. 
Here, division means to express a natural number $m$ as a sum of several non-negative integers.

\begin{df}[Young Diagram]
Let $D\in[m]$ be fixed. A vector $\mathbf{k}=[k_1,\ldots,k_D]$ of natural numbers $k_1,\ldots,k_D$ is called a Young diagram if it satisfies $m = k_1+\ldots+k_D$ and $k_1\ge \ldots\ge k_D$.
\end{df}

This partition can be represented by a total of $m$ boxes, consisting of $D$ rows with $k_i$ boxes in the $d$-th row. 
Here, each row is left-justified.
We formally provide the definition of basis tableaux in order to use the notation in later discussion.

\begin{df}[Basis Tableau]
Let $n\ge m$  and $\mathbf{k}=[k_1,\ldots,k_D]$ a Young diagram. 
A vector $\mathbf{T}= \left[\mathbf{t}_1, \ldots, \mathbf{t}_D \right]$ of vectors $\mathbf{t}_d= [t_{d,1}, \ldots, t_{d,k_d}]\in [n]^{k_d}$ is called basis tableau of depth $D$ if it satisfies the following conditions:
\begin{enumerate}
    \item (Different components) $t_{d,w} \ne t_{d',w'}$ for $(d,w)\ne (d',w')$.
    \item (Row monotonicity) $t_{d,1}<t_{d,2}<\ldots< t_{d,k_d}$ for each $d\in[D]$. 
    \item (Partial column monotonicity) If $k_d= k_{d+1}$, then $t_{d,1}<t_{d+1,1}$ for $d=1,\ldots, D-1$.
\end{enumerate}
\end{df}

We denote the set of basis tableaux for $m$ of depth $D$ by $\mathcal{T}_{m,D}$,
and 
the set $\bigcup_{1\le D \le m} \mathcal{T}_{m,D}$ of basis tableaux with at most depth $m$ by $\mathcal{T}_m$.

Here, we introduce some notation about basis to state Theorem \ref{represention}.
Let $\mathbf{e}_1, \ldots, \mathbf{e}_n$ the standard basis of $\mathbb{R}^n$.
For $\mathbf{u}=[u_1,\ldots,u_m] \in [n]^m$,
we set 
\begin{align}
    \mathbf{e}_{\mathbf{u}}:=\mathbf{e}_{u_1,\ldots,u_m}=\mathbf{e}_{u_1}\otimes \cdots\otimes\mathbf{e}_{u_m} \in \underbrace{\mathbb{R}^n\otimes \cdots \otimes \mathbb{R}^n}_{m}=\mathbb{R}^{n^m}.
    \label{eq:basis}
\end{align}
For any basis tableau $\textbf{T}\in \mathcal{T}_{m,D}$,
natural numbers $\mathbf{u}=[u_1,\ldots, u_m]\in [n]^m$ are given as
$
    u_{\ell}:= d\in [n] \quad \textit{if} \quad \ell \in \{\mathbf{t}_{d}\}. 
$\footnote{
For a vector $\mathbf{t}=[t_{1},\ldots,t_{k}]$,
we set $\{\mathbf{t}\}:= \{t_{1},\ldots,t_{k}\}$. 
For example, when $\mathbf{t}=[1,1]$, $\{\mathbf{t}\}=\{1\}$.
}
Then, we define the map $\phi: \mathcal{T}_m \to [n]^m$ by $\phi(\textbf{T}):= (u_1,..,u_m) \in [n]^m$. 

\begin{df}[Basis Tableau Vector]
Let $n\ge m$. 
For $D\in[m]$
and
$\mathbf{T}= \left[\mathbf{t}_1, \ldots, \mathbf{t}_D \right]\in \mathcal{T}_{m,D}$, 
the basis tableau vector is defined by $\mathbf{e}_{\mathbf{T}}:= \mathbf{e}_{\phi(\mathbf{T})}\in \mathbb{R}^{n^m}$, where the right hand side is defined by (\ref{eq:basis}).
\end{df}
Figure \ref{fig:equivariance} shows an example of Young diagrams, basis tableaux, and basis tableaux vectors when $m = 3$. 
For $\mathbf{X} \in \mathbb{R}^{n^m}$,
we define the linear map by
\begin{align}
    \hat{\mathbf{X}}_{b}&:\mathbb{R}^b\ni (a_1, \ldots, a_b) \mapsto  (a_1\mathbf{X}, \ldots, a_b\mathbf{X}) \in \mathbb{R}^{n^m\times b}.
    \label{eq:linear}
\end{align}
In the following, 
we mainly treat the case where 
$\mathbf{X}=\mathbf{e}_{\mathbf{T}}$ (i.e., $\hat{\mathbf{X}}_{b}=\hat{\mathbf{e}}_{\mathbf{T},b}$).
Then, we have the following theorem.
\begin{thm}[Representation Theorem]\label{represention}
Let $n\ge m$ and $G=S_n$.
For any equivariant continuous map  $F: \mathbb{R}^{n^l \times a} \rightarrow \mathbb{R}^{n^m \times b}$, there exist continuous maps $F_{\mathbf{T}}: \mathbb{R}^{n^l \times a} \rightarrow \mathbb{R}^b$ indexed by basis tableaux $\mathbf{T} \in \mathcal{T}_m$ such that 
\begin{align*}
F= \sum_{D=1}^m \sum_{\mathbf{T} \in \mathcal{T}_{m,D}}
 \tau_{H_D}(F_{\textbf{T}}\circ \hat{\mathbf{e}}_{\mathbf{T}, b}),
  \end{align*}where $C_{n-i}$ denote the cyclic group of order $n-i$ on the set $\{i+1,..,n\}$ and
  \begin{align*}
      H_{D} &:= C_n \circ \cdots \circ C_{n-D+1}
      = \{\sigma_D \cdot \sigma_{D-1} \cdots \sigma_{1}  \mid \sigma_i \in C_{n-D+i}~ (i=n,n-1,\ldots,n-D+1)\}.
  \end{align*}
Furthermore,  $ H_{D} $ is a Reynolds design of $F_{\textbf{T}
}\circ \hat{\mathbf{e}}_{\mathbf{T}, b}$ for any $\textbf{T}\in \mathcal{T}_{m,D}$.
\end{thm}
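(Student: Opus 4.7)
The plan is to decompose $F$ coordinate-wise with respect to the standard basis of $\mathbb{R}^{n^m}$, group the coordinates by $S_n$-orbits on $[n]^m$, and show that each orbit's contribution is a single Reynolds operator evaluated at a function supported on one representative; the basis tableaux supply the representatives, and $H_D$ supplies the minimal subset that realizes the reductive formula.

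First I would define the scalar components $F_{\mathbf{u}}(\mathbf{X}) := \langle F(\mathbf{X}), \mathbf{e}_{\mathbf{u}} \rangle \in \mathbb{R}^b$ for each $\mathbf{u} \in [n]^m$, giving the expansion $F(\mathbf{X}) = \sum_{\mathbf{u}} F_{\mathbf{u}}(\mathbf{X}) \otimes \mathbf{e}_{\mathbf{u}}$. Equivariance of $F$, combined with $g \cdot \mathbf{e}_{\mathbf{u}} = \mathbf{e}_{g \cdot \mathbf{u}}$, translates into $F_{g \cdot \mathbf{u}}(\mathbf{X}) = F_{\mathbf{u}}(g^{-1} \cdot \mathbf{X})$ for all $g \in S_n$. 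This has two key consequences: (i) all coordinates within an $S_n$-orbit on $[n]^m$ are determined by one representative, and (ii) $F_{\mathbf{u}}$ is $\mathrm{Stab}_{S_n}(\mathbf{u})$-invariant. The $S_n$-orbits on $[n]^m$ correspond bijectively to set partitions of $[m]$ with at most $n$ parts, and the monotonicity conditions in the definition of $\mathcal{T}_m$ select exactly one multi-index $\phi(\mathbf{T}) \in \{1,\ldots,D\}^m$ per orbit, with $\mathrm{Stab}_{S_n}(\phi(\mathbf{T})) = S_{\{D+1,\ldots,n\}}$ of order $(n-D)!$. Using (i)–(ii), the sum of coordinates over $\mathrm{Orbit}(\phi(\mathbf{T}))$ can be rewritten, via a standard coset decomposition of $S_n$ modulo $\mathrm{Stab}(\phi(\mathbf{T}))$, as $\tau_{S_n}$ applied to $F_{\phi(\mathbf{T})} \otimes \mathbf{e}_{\phi(\mathbf{T})}$ up to a factor of $|H_D|$, which I absorb by setting $F_{\mathbf{T}} := |H_D|\cdot F_{\phi(\mathbf{T})}$ (continuous because $F_{\phi(\mathbf{T})}$ is). Summing over all basis tableaux yields $F = \sum_{D,\mathbf{T}} \tau_{S_n}(F_{\mathbf{T}} \otimes \mathbf{e}_{\phi(\mathbf{T})})$, reducing the theorem to the Reynolds design claim that $\tau_{S_n}$ can be replaced by $\tau_{H_D}$ on each summand.

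The combinatorial heart is showing that $H_D$ enumerates $\mathrm{Orbit}(\phi(\mathbf{T}))$ exactly once, equivalently that $h \mapsto (h(1),\ldots,h(D))$ is a bijection from $H_D$ onto the set of injections $[D]\hookrightarrow[n]$. Since $|H_D| = n(n-1)\cdots(n-D+1)$ already matches this cardinality, surjectivity suffices. Writing $h = \sigma_D \sigma_{D-1} \cdots \sigma_1$ with $\sigma_i \in C_{n-D+i}$ acting cyclically on $\{D-i+1,\ldots,n\}$, the key fact is that $\sigma_i$ fixes every $d\in[D]$ with $d < D-i+1$, so $h(d) = \sigma_D \sigma_{D-1} \cdots \sigma_{D-d+1}(d)$. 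I would then induct on $d$: given $\sigma_D,\ldots,\sigma_{D-d+2}$ (which determine $h(1),\ldots,h(d-1)$), the partial product $\sigma_D \cdots \sigma_{D-d+2}$ agrees with $h$ on $[d-1]$ by the same fixing pattern, and hence maps the complementary set $\{d,\ldots,n\}$ bijectively onto $[n]\setminus\{h(1),\ldots,h(d-1)\}$; since $\sigma_{D-d+1}\in C_{n-d+1}$ sends $d$ bijectively onto $\{d,\ldots,n\}$ as it ranges through its $n-d+1$ elements, $h(d)$ ranges bijectively over $[n]\setminus\{h(1),\ldots,h(d-1)\}$, completing the induction.

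The main obstacle is precisely this inductive bookkeeping, which hinges on the observation that the outer factors $\sigma_D,\ldots,\sigma_{D-d+2}$ act compatibly on the initial segment $[d-1]$ and its complement. A secondary care point is choosing the handedness of coset decomposition consistent with the definition of $\tau_H$, so that the $\mathrm{Stab}$-invariance of $F_{\mathbf{T}}$ collapses each coset to a single term; once this is done, the Reynolds design identity $\tau_{S_n}(F_{\mathbf{T}} \otimes \mathbf{e}_{\phi(\mathbf{T})}) = \tau_{H_D}(F_{\mathbf{T}} \otimes \mathbf{e}_{\phi(\mathbf{T})})$ follows immediately from the orbit bijection together with the matching cardinalities $|H_D| = [S_n : \mathrm{Stab}(\phi(\mathbf{T}))]$.
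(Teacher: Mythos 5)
Your overall route is the paper's own: expand $F$ into components $F_{\mathbf{u}}$, use equivariance to obtain $F_{g\cdot\mathbf{u}}(\mathbf{X})=F_{\mathbf{u}}(g^{-1}\cdot\mathbf{X})$, index the $S_n$-orbits on $[n]^m$ by basis tableaux, and reduce the whole theorem to the claim that $H_D$ enumerates each orbit exactly once. The orbit--stabilizer bookkeeping, the identification $|H_D|=[S_n:\mathrm{Stab}(\phi(\mathbf{T}))]$, and the absorption of the factor $|H_D|$ into $F_{\mathbf{T}}$ are all correct and coincide with the paper's appendix calculation.

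The genuine problem is the point you defer as a ``secondary care point'': the handedness is not bookkeeping, it is where the argument can fail. By definition $\tau_{H_D}(f(-))=\frac{1}{|H_D|}\sum_{g\in H_D}g^{-1}\cdot f(g\cdot -)$, so the summand indexed by $g$ deposits the value $F_{\mathbf{T}}(g\cdot x)$ in the basis slot $\mathbf{e}_{g^{-1}\cdot\mathbf{T}}$, whose multi-index is determined by $\bigl(g^{-1}(1),\ldots,g^{-1}(D)\bigr)$. What the design identity therefore requires is that $g\mapsto\bigl(g^{-1}(1),\ldots,g^{-1}(D)\bigr)$ be a bijection from $H_D$ onto the injections from $[D]$ to $[n]$, whereas your induction establishes bijectivity of $h\mapsto\bigl(h(1),\ldots,h(D)\bigr)$. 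For the \emph{ordered product} defining $H_D$ these are not equivalent, because $H_D$ is not closed under inversion. Concretely, with the composition order you adopt (the smallest-support factor $\sigma_1$ applied first), take $n=4$, $D=2$: $g_1=(13)(24)\cdot(234)=(132)$ and $g_2=(1432)$ are distinct elements of $H_2$, yet $g_1^{-1}=(123)$ and $g_2^{-1}=(1234)$ both send $1\mapsto 2$ and $2\mapsto 3$; so the inverse map is not injective (hence, by counting, not surjective), and $\sum_{g\in H_2}g^{-1}\cdot\bigl(F_{\mathbf{T}}(g\cdot x)\otimes\mathbf{e}_{\mathbf{T}}\bigr)$ double-counts some orbit elements and misses others. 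The repair is exactly the paper's Normalization Lemma: compose in the opposite order (full-support cyclic factor applied first) and choose the $\sigma_d$ inductively so that $g(j_d)=d$, which makes $\mathbf{j}=g^{-1}\cdot[1,\ldots,D]$ realize every injection exactly once; equivalently, replace your $H_D$ by $H_D^{-1}$. Your inductive bookkeeping transfers verbatim to that version, so the gap is local---but as written, the final identity $\tau_{S_n}(F_{\mathbf{T}}\circ\hat{\mathbf{e}}_{\mathbf{T},b})=\tau_{H_D}(F_{\mathbf{T}}\circ\hat{\mathbf{e}}_{\mathbf{T},b})$ does not follow from the bijection you actually proved.
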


\subsection{Proof of Theorem \ref{represention}}


We introduce the following notion to obtain tableau-based representation of elements in $[n]^m$.

\begin{df}[Extended Tableau]
Let $n\ge D$.
Let $[n]^D_{\#}$ be the set of $D$ different natural numbers at most $n$ defined by
$
    [n]^D_{\#}:= \{[j_1, \ldots, j_D] \in [n]^D \mid j_a \ne j_b ~\text{for}~ a\ne b \in [D]\}.
$
Then,
we call elements in $\tilde{\mathcal{T}}_{m,D}:=[n]^D_{\#}\times \mathcal{T}_{m,D}$ extended tableaux with depth $D$.
\end{df}

We denote the set $\bigcup_{1\le D \le m} \tilde{\mathcal{T}}_{m,D}$ of extended tableaux with at most depth $m$  by $\tilde{\mathcal{T}}_m$.
The action $g\cdot [j_1, \ldots, j_D]:=[g\cdot j_1, \ldots, g\cdot j_D]$ of $G$ on $[n]^D_{\#}$ is well-defined.
Then, we define the action of $G$ on $\tilde{\mathcal{T}}_{m,D}$ by $g\cdot (\mathbf{j}, \mathbf{T}) := (g\cdot \mathbf{j}, \mathbf{T})$.
In the following, 
we identify an extended tableau $([1,2,\ldots, D],\mathbf{T})$ with the basis tableau $\mathbf{T}$.

Next, we introduce some notation to represent elements in $[n]^m$ by extended tableaux.
We first define the partial order $\succ$ of vectors of natural numbers that can have different dimensions.
For $\mathbf{t} =[t_1,\ldots,t_k]\in \mathbb{N}^k$ and $\mathbf{t}' =[t'_1,\ldots,t'_k]\in \mathbb{N}^{k'}$,
we denote as
$\mathbf{t} \succ \mathbf{t}'$ if either (i) $k>k'$ or (ii) $k=k'$ and $t_1>t'_1$.
We note that $\mathbf{t}_1\succ \ldots \succ \mathbf{t}_D$ holds for a basis tableau $\mathbf{T}=[\mathbf{t}_1,\ldots,\mathbf{t}_D]$ by definition.
Let $\mathbf{u}=[u_1,\ldots,u_k]\in [n]^{m}$.
We set $D(\mathbf{u}):=|\{\mathbf{u}\}|$.
We define the multiplicity map $\mathrm{mult}_{\mathbf{u}}:\{\mathbf{u}\} \to [m]$ by  
$
    \mathrm{mult}_{\mathbf{u}}(u)
    := |\{\ell \in [m] \mid u_\ell = u\}|.
$
For $u\in \{\mathbf{u}\}$,
we define $\mathbf{t}_u:= [t_1,\ldots,t_{k_u}]$, 
where $k_u:=\mathrm{mult}_{\mathbf{u}}(u)$,
$u_{t_1}=\cdots=\u_{t_{k_u}}=\mathbf{u}$,
and $t_{1}<\cdots<t_{k_u}$.
Here, we note that either $\mathbf{t}_{u}\succ \mathbf{t}_{u'}$ or $\mathbf{t}_{u}\prec \mathbf{t}_{u'}$ holds for $u\ne u' \in \{\mathbf{u}\}$ by definition.
Thus, 
there exist different natural numbers $j_1,\ldots,j_D \subset [n]$ such that $\{j_1,\ldots,j_D\} = \{\mathbf{u}\}$ and $\mathbf{t}_{j_1} \succ \ldots \succ \mathbf{t}_{j_D}$.  

\begin{df}[Tableau Representation]
Let $n\ge m$ .
We define the map $\psi: [n]^m \ni \mathbf{u} \mapsto (\mathbf{j},\mathbf{T}) \in \tilde{\mathcal{T}}_m$ called tableau representation by
\begin{align*}
    \psi(\mathbf{u}) 
    := ([j_1, \ldots, j_D], [\mathbf{t}_{j_1}, \ldots, \mathbf{t}_{j_D}])
    \in \tilde{\mathcal{T}}_{m,D}.
\end{align*}
\end{df}

We define the map $\phi: \tilde{\mathcal{T}}_m \to [n]^m$ as follows:
For any extended tableau $(\mathbf{j}, \textbf{T})\in \tilde{\mathcal{T}}_{m,D}$,
natural numbers $\mathbf{u}=[u_1,\ldots, u_m]\in [n]^m$ are given as
$
    u_{\ell}:= j_d\in [n] \quad \textit{if} \quad \ell \in \{\mathbf{t}_{d}\}, 
$
and $\phi(\mathbf{j}, \textbf{T}):= (u_1,..,u_m) \in [n]^m$.

\begin{lem}\label{lem:bijective1}
    When $n\ge m$,
    the tableau representation $\psi: [n]^m \to \tilde{\mathcal{T}}_m$ is bijective and $\psi(g\cdot \mathbf{u}) = g\cdot \psi(\mathbf{u})$ for $g \in H_D$.
\end{lem}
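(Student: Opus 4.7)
The plan is to establish bijectivity of $\psi$ by producing $\phi$ as an explicit two-sided inverse, and then to deduce equivariance from the observation that the basis tableau component of $\psi(\mathbf{u})$ encodes only the partition of $[m]$ by positions of equal coordinates, which is unchanged by any relabeling of values in $[n]$.

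First I would verify that $\psi(\mathbf{u}) \in \tilde{\mathcal{T}}_{m,D(\mathbf{u})}$, which amounts to checking the three basis tableau conditions on the $\mathbf{T}$-component. Distinctness of entries across rows follows because each index $\ell \in [m]$ belongs to $\mathbf{t}_u$ for exactly one value $u \in \{\mathbf{u}\}$. Row monotonicity $t_{1}<\cdots<t_{k_u}$ is built into the definition of $\mathbf{t}_u$. Partial column monotonicity is inherited from the $\succ$-ordering convention $\mathbf{t}_{j_1} \succ \cdots \succ \mathbf{t}_{j_D}$ once one matches the tie-breaking rule of $\succ$ against that in the definition of $\mathcal{T}_{m,D}$.

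Next I would verify the two compositions. For $\phi \circ \psi = \mathrm{id}_{[n]^m}$: by construction $\psi$ records in $\mathbf{t}_{j_d}$ precisely the positions where value $j_d$ occurs in $\mathbf{u}$, and $\phi$ then reads these back off, reproducing $\mathbf{u}$ coordinate by coordinate. For $\psi \circ \phi = \mathrm{id}_{\tilde{\mathcal{T}}_m}$: given $(\mathbf{j}, \mathbf{T}) \in \tilde{\mathcal{T}}_{m,D}$, the vector $\mathbf{u} = \phi(\mathbf{j}, \mathbf{T})$ has exactly the distinct values $\{j_1,\ldots,j_D\}$ with position vectors $\mathbf{t}_1, \ldots, \mathbf{t}_D$; because $\mathbf{T}$ is already a basis tableau, the canonical order $\mathbf{t}_1 \succ \cdots \succ \mathbf{t}_D$ holds, so $\psi$ returns the same pair without permutation. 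A cardinality check via the Stirling identity $\sum_{D=1}^m n(n-1)\cdots(n-D+1)\, S(m,D) = n^m$ confirms that the finite sets $[n]^m$ and $\tilde{\mathcal{T}}_m$ really have matching sizes.

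For the equivariance statement, the key observation is that $\mathbf{t}_{g(u)}^{g\cdot \mathbf{u}} = \mathbf{t}_{u}^{\mathbf{u}}$ for every value $u$ and every permutation $g$, since relabeling coordinate values does not move the indices where they occur. Consequently, the basis tableau appearing in $\psi(g\cdot \mathbf{u})$ coincides with that of $\psi(\mathbf{u})$, the $\succ$-ordering criterion (which depends only on the tableau) is preserved under $g$, and the distinct-value vector becomes $[g(j_1),\ldots,g(j_D)] = g\cdot \mathbf{j}$. Hence $\psi(g\cdot \mathbf{u}) = (g\cdot \mathbf{j}, \mathbf{T}) = g\cdot \psi(\mathbf{u})$ for every $g \in S_n$, and in particular for $g \in H_D$. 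I expect the main obstacle throughout to be purely bookkeeping around the competing linear orders on rows of a tableau: once the tie-breaking conventions for $\succ$ and for basis tableaux are reconciled, both compositions collapse to identities and the equivariance reduces to the relabeling-invariance of position partitions.
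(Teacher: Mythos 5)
Your proposal is correct and follows essentially the same route as the paper: both arguments establish bijectivity by exhibiting $\phi$ as the inverse of $\psi$ (the paper shows $\phi\circ\psi=\mathrm{id}$ together with injectivity of $\phi$ via the $\succ$-ordering of the row vectors, while you verify both compositions directly, which is the same argument in a slightly different packaging). Your write-up is in fact more complete, since you also check well-definedness of $\psi$ and prove the equivariance claim $\psi(g\cdot\mathbf{u})=g\cdot\psi(\mathbf{u})$ (indeed for all $g\in S_n$), a part of the statement that the paper's proof leaves implicit; your remark about reconciling the tie-breaking conventions of $\succ$ with condition 3 of the basis tableau definition is warranted, as the two are stated with opposite inequalities in the text.
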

\begin{proof}
First, from the construction, $\psi$ is injective and $\phi\circ \psi = \mathrm{id}_{[n]^m}$.
Since $[n]^m$ and $\mathcal{T}$ are finite sets, if we show that $\phi$ is injective, then $\psi$ is bijective.
For the extended tableau $\mathbf{S},\mathbf{T}$, assume that $\phi(\mathbf{S})=\phi(\mathbf{T})$. 
Note that the set of row vectors $\{\mathbf{s}_1,...,\mathbf{s}_d\}$ of $\mathbf{S}$ is uniquely determined from $\phi(\mathbf{S})$. 
Then, since the extended tableaux satisfy the order $\succ$ in which this goes between the row vectors, the extended tableau $\textbf{S}$ having row vectors $\{\mathbf{s}_1,...,\mathbf{s}_d\}$ is unique. Hence, we have $\textbf{S}=\textbf{T}$.
\end{proof}

\begin{df}[Extended Tableau Vector]
Let $n\ge m$.
The extended tableau vector $\mathbf{e}_{\mathbf{j},\mathbf{T}}$ is defined by $\mathbf{e}_{\mathbf{j},\mathbf{T}}:=\mathbf{e}_{\phi(\mathbf{j},\mathbf{T})}$.
\end{df}
%
%
For an extended tableau $(\mathbf{j},\mathbf{T})\in \tilde{\mathcal{T}}_m$,
the linear map $\hat{\mathbf{e}}_{\mathbf{j},\mathbf{T},b}: \mathbb{R}^b \rightarrow \mathbb{R}^{n^m\times b}$
is defined by $\mathbf{X}=\mathbf{e}_{\mathbf{j},\mathbf{T}}$ in
(\ref{eq:linear}).

\begin{lem}[Normalization]\label{lem:bijective2}
    Let $n\ge D$.
    For $\mathbf{j}\in[n]^D_{\#}$, there uniquely exists $g\in H_D$ such that $\mathbf{j}= g^{-1}\cdot [1,2,\ldots, D]\in[n]^D_{\#}$.
    Hence, this correspondence $[n]^D_{\#} \ni \mathbf{j} \to g \in H_D$ is bijective.
\end{lem}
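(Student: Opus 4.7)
The plan is to build $g=\sigma_D\sigma_{D-1}\cdots\sigma_1$ one factor at a time by a greedy algorithm, exploiting the fact that $\sigma_i\in C_{n-D+i}$ acts on $\{D-i+1,\ldots,n\}$ and fixes $\{1,2,\ldots,D-i\}$ pointwise. Componentwise inspection of the condition $\mathbf{j}=g^{-1}\cdot[1,\ldots,D]$ shows that the constraint coming from position $\ell\in[D]$ is a scalar equation in which only the outer factors $\sigma_{D-\ell+1},\sigma_{D-\ell+2},\ldots,\sigma_D$ can participate, since the remaining inner factors all fix $\ell$. In particular, choosing $\sigma_{D-\ell+1}$ will never disturb the constraints at positions $1,\ldots,\ell-1$ already satisfied by the previously chosen outer factors.

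Given this, I would induct on $\ell=1,2,\ldots,D$, determining $\sigma_D,\sigma_{D-1},\ldots,\sigma_1$ in that order. At stage $\ell$ the outer factors $\sigma_{D-\ell+2},\ldots,\sigma_D$ are already fixed, forming the partial product $P_{\ell-1}:=\sigma_D\sigma_{D-1}\cdots\sigma_{D-\ell+2}$, and the $\ell$-th constraint reduces to an equation of the form $\sigma_{D-\ell+1}(\ell)=P_{\ell-1}^{-1}(j_\ell)$. Since $C_{n-\ell+1}$ acts simply transitively on $\{\ell,\ldots,n\}$, this equation determines $\sigma_{D-\ell+1}$ uniquely, provided the right-hand side lies in $\{\ell,\ldots,n\}$.

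The main obstacle is verifying exactly that containment, which I handle with an auxiliary invariant: $P_{\ell-1}(m)=j_m$ for all $m<\ell$. The invariant needs two ingredients—$P_m(m)=j_m$ at stage $m$ by the very way $\sigma_{D-m+1}$ was chosen, and the factors of $P_{\ell-1}$ not present in $P_m$ (namely $\sigma_{D-m},\ldots,\sigma_{D-\ell+2}$) all fix $\{1,\ldots,m\}$ pointwise and so cannot disturb the value at $m$. Once the invariant is in hand, $P_{\ell-1}^{-1}(j_m)=m$ for every $m<\ell$, and the distinctness of the entries of $\mathbf{j}$ forces $P_{\ell-1}^{-1}(j_\ell)\notin\{1,\ldots,\ell-1\}$, giving the required containment in $\{\ell,\ldots,n\}$. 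Closing the induction at $\ell=D$ produces a unique tuple $(\sigma_1,\ldots,\sigma_D)$, and hence a unique $g\in H_D$; equivalently, the correspondence $\mathbf{j}\mapsto g$ is a bijection.
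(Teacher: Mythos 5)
Your construction is essentially the same as the paper's: both determine the factor from the largest cyclic group first and peel off one cyclic factor per position by simple transitivity, and your invariant $P_{\ell-1}(m)=j_m$ makes explicit the containment $P_{\ell-1}^{-1}(j_\ell)\in\{\ell,\ldots,n\}$ that each step needs but that the paper's proof does not verify. So the mathematical content is sound, and in that respect your write-up is the more careful one.

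One direction slip is worth flagging. The recursion you set up solves $g\cdot[1,\ldots,D]=\mathbf{j}$, i.e.\ $g(\ell)=j_\ell$ for all $\ell$, not the literal condition $\mathbf{j}=g^{-1}\cdot[1,\ldots,D]$, i.e.\ $g(j_\ell)=\ell$. Your key observation --- that the inner factors fix $\ell$ and hence drop out of the $\ell$-th constraint --- is valid only for $g(\ell)$, where $\sigma_1,\ldots,\sigma_{D-\ell}$ are applied first, to the value $\ell$ itself; in $g^{-1}(\ell)=\sigma_1^{-1}\cdots\sigma_D^{-1}(\ell)$ those factors are applied last, to a value that is in general no longer $\ell$, so they do participate. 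In fact, with $H_D$ as defined in Theorem~\ref{represention} the literal $g^{-1}$ statement fails: for $n=4$, $D=2$, $C_4=\langle(1\,2\,3\,4)\rangle$, $C_3=\langle(2\,3\,4)\rangle$, both $(1\,3)(2\,4)\cdot(2\,4\,3)=(1\,3\,4)$ and $(1\,4\,3\,2)\cdot(2\,3\,4)=(1\,4)$ lie in $H_2$ and both satisfy $g^{-1}\cdot[1,2]=[4,2]$, so uniqueness fails. This inconsistency originates in the paper itself: its proof also produces $g$ with $g\cdot\mathbf{j}=[1,\ldots,D]$ but silently places the $C_n$-factor on the opposite end of the product from the definition of $H_D$. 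The version you proved --- that $H_D\ni g\mapsto g\cdot[1,\ldots,D]$ is a bijection onto $[n]^D_{\#}$ --- is the one that is true for $H_D$ as actually defined and is what the decomposition $\tilde{\mathcal{T}}_{m,D}=\bigcup_{g} g\cdot\mathcal{T}_{m,D}$ requires; just be aware that you have proved the corrected statement rather than the stated one, and say so explicitly rather than asserting that the two componentwise conditions coincide.
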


\begin{proof}
There uniquely exists $\sigma_{1} \in C_{n}$ such that $\sigma_{1}\left(j_{1}\right)=1$.
Inductively, there uniquely exists $\sigma_{d} \in C_{n-d+1}$ such that $\sigma_{d}\left(\sigma_{d-1}\cdot \sigma_{d-2}  \cdots  \sigma_1 ( j_{d})\right)=d$ for $d=2,\ldots,D$.
Then $g := \sigma_{D} \cdots \sigma_{1}\in H_D$ satisfies $g\cdot \mathbf{j}=  [1,2,\ldots, D]$ by definition. 
\end{proof}

From Lemma \ref{lem:bijective2},
an extended tableau $(\mathbf{j},\mathbf{T})\in \tilde{\mathcal{T}}_{m,D}$ is uniquely represented by 
$
    (\mathbf{j},\mathbf{T}) = g^{-1}\cdot([1,\ldots,D], \mathbf{T})
    =g^{-1}\cdot\mathbf{T}
$
as in Figure \ref{fig:reduction},
where $g\in H_D$ and we identified $\mathbf{T}\in \mathcal{T}_{m,D}$ with $([1,\ldots,D], \mathbf{T})\in \tilde{\mathcal{T}}_{m,D}$ in the last equation.
Thus, we have $\tilde{\mathcal{T}}_{m,D}= \bigcup_{g\in H_D} g^{-1} \mathcal{T}_{m,D}$, 
and
$\tilde{\mathcal{T}}_{m}= \bigcup_{1\le D \le m} \bigcup_{g\in H_D} g^{-1} \mathcal{T}_{m,D}$.
From Lemma \ref{lem:bijective1}, for each $\mathbf{u} \in [n]^m$, there uniquely exists $g\in H_D$ and $\mathbf{T}\in\mathcal{T}_{m}$ such that $\psi (\mathbf{u}) = g^{-1}\cdot \mathbf{T}$ (or equivalently $\mathbf{u} = \psi^{-1}(g^{-1}\cdot \mathbf{T})$). 
In the following, we omit the bijective $\psi$ for notational simplicity.



In the following, 
we prove Theorem \ref{represention}.
We can write $F =  \sum_{\mathbf{u} \in\left[n\right]^{m}}  f_{\mathbf{u}} \cdot \hat{e}_{\mathbf{u}, b}$ by maps $f_{\mathbf{u}}: \mathbb{R}^{n^l \times a} \rightarrow \mathbb{R}^b$. 
Then since $F$ is equivariant, we have
$
\sum_{\mathbf{u} \in\left[n\right]^{m}} f_{\mathbf{u}}\cdot \hat{e}_{\mathbf{u}, b}(g\cdot x) 
=F(g \cdot x)
=g\cdot F(x)
=\sum_{\mathbf{u} \in\left[n\right]^{m}} f_{\mathbf{u}} \cdot \hat{e}_{g\cdot \mathbf{u}, b}(x).
$
This implies that 
\begin{align}
    f_{\mathbf{u}} \cdot \hat{e}_{\mathbf{u}, b}(g\cdot x)
=f_{g^{-1}\cdot \mathbf{u}}\cdot \hat{e}_{\mathbf{u}, b}(x). \label{eq:component}
\end{align}

\begin{figure*}[!t]
\centering
    \includegraphics[bb=0 0 1590 300, 
    scale=0.24]{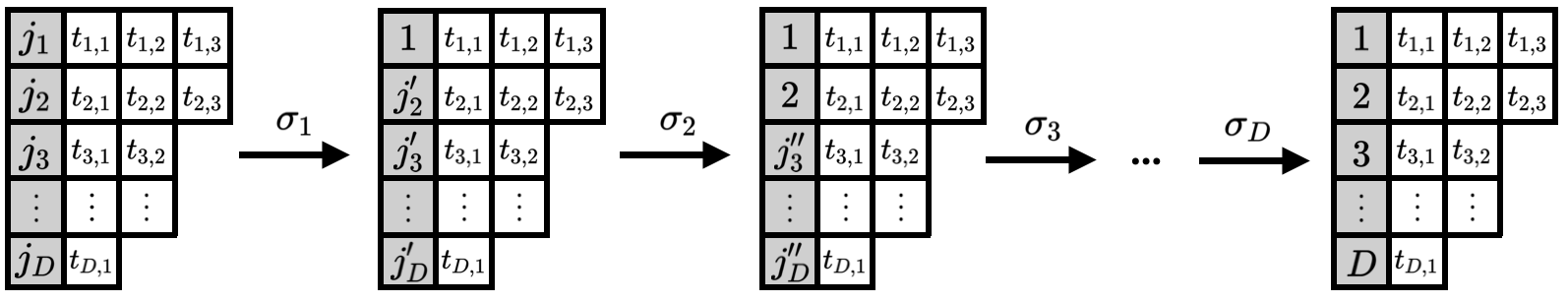}
\caption{
Reduction from an extended tableau to a basis tableau by permutations.
An extended tableau $(\mathbf{j},\mathbf{T})$ is converted to a basis tableau by multiplying $g=\sigma_D \cdot \sigma_{D-1}\cdots \sigma_{1}$.
}
\label{fig:reduction}
\end{figure*}

Then,
we obtain the following equations:
\begin{align}
F(x) 
=\sum_{1\leq D\leq m}\sum_{\textbf{T}  \in \mathcal{T}_{m, D}} \tau_{H_{D}}\left( F_{\textbf{T} } \cdot \hat{e}_{\textbf{T} , b}\right)(x).
\label{eq:representation}
\end{align}
We provide the detailed calculation of (\ref{eq:representation}) in the appendix.


\section{Equivariant ReyNets and Universality}

In this section, we define equivariant models using Theorem \ref{represention}. 

\begin{df}[Equivariant Reynolds Nets]
We assume that $n\ge m$.
For any basis tableaux $\mathbf{T} \in \mathcal{T}_m$, Let $\mathcal{N}_{\mathbf{T}}: \mathbb{R}^{n^l \times a} \rightarrow \mathbb{R}^b$ be a Multi layer perceptron. 
The map $\mathcal{E}: \mathbb{R}^{n^l \times a} \rightarrow \mathbb{R}^{n^m \times b}$
\begin{align*}
\mathcal{E}= \sum_{D=1}^m \sum_{\mathbf{T} \in \mathcal{T}_{m,D}} \tau_{H_D}(\mathcal{N}_{\textbf{T}
}\circ \hat{\mathbf{e}}_{\mathbf{T}, b}),
  \end{align*} is called reduced Equivariant Reynolds Nets (equivariant ReyNets). 
\end{df}

Theorem \ref{represention} guarantees the universal approximation property of this model.
\begin{thm}[Universality]\label{thm:universality-equiv}
    We assume that $n\ge m$ and $G=S_n$.
  Let $F:  \mathbb{R}^{n^l \times a} \rightarrow \mathbb{R}^{n^m \times b}$ be a continuous equivariant function. For any compact set $K \subset \mathbb{R}^{n^{\ell}\times a}$ , there exists a equivariant Reynolds network that approximates $F$ to an arbitrary precision on $K$.
Namely, equivariant Reynolds nets are a universal approximator for equivariant functions.
\end{thm}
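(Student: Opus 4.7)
The plan is to exploit Theorem \ref{represention} directly: it already expresses any continuous equivariant $F$ in exactly the same functional form as an equivariant ReyNet, with continuous (but a priori non-neural) scalar components $F_{\mathbf{T}}$. Hence universality reduces to replacing each $F_{\mathbf{T}}$ by an MLP $\mathcal{N}_{\mathbf{T}}$ of sufficient accuracy, and then controlling how the approximation errors propagate through the outer sum of reductive Reynolds operators.

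First I would fix a continuous equivariant $F: \mathbb{R}^{n^l \times a}\to \mathbb{R}^{n^m \times b}$ and a compact set $K\subset \mathbb{R}^{n^\ell \times a}$, and apply Theorem \ref{represention} to obtain continuous maps $F_{\mathbf{T}}:\mathbb{R}^{n^l\times a}\to \mathbb{R}^b$ with
\[
F \;=\; \sum_{D=1}^m\sum_{\mathbf{T}\in \mathcal{T}_{m,D}}\tau_{H_D}\bigl(F_{\mathbf{T}}\circ \hat{\mathbf{e}}_{\mathbf{T},b}\bigr).
\]
Because each summand is evaluated at $g\cdot x$ for $g\in H_D$ inside $\tau_{H_D}$, I would enlarge $K$ to
\[
K' \;:=\; \bigcup_{D=1}^m\bigcup_{g\in H_D} g\cdot K,
\]
which is a finite union of compact sets and hence compact. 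Let $N:=|\mathcal{T}_m|$ denote the total number of basis tableaux.

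Given $\varepsilon>0$, I would then invoke the classical universal approximation theorem for multilayer perceptrons (Hornik, Cybenko) to obtain, for each $\mathbf{T}\in \mathcal{T}_m$, an MLP $\mathcal{N}_{\mathbf{T}}$ with
\[
\sup_{x\in K'}\bigl\|F_{\mathbf{T}}(x)-\mathcal{N}_{\mathbf{T}}(x)\bigr\| \;<\; \varepsilon/N.
\]
The induced equivariant ReyNet $\mathcal{E}=\sum_{D,\mathbf{T}}\tau_{H_D}(\mathcal{N}_{\mathbf{T}}\circ \hat{\mathbf{e}}_{\mathbf{T},b})$ then satisfies, for all $x\in K$,
\[
\bigl\|F(x)-\mathcal{E}(x)\bigr\|
\;\le\; \sum_{D,\mathbf{T}}\bigl\|\tau_{H_D}\bigl((F_{\mathbf{T}}-\mathcal{N}_{\mathbf{T}})\circ \hat{\mathbf{e}}_{\mathbf{T},b}\bigr)(x)\bigr\|
\;\le\; \sum_{D,\mathbf{T}}\sup_{y\in K'}\|F_{\mathbf{T}}(y)-\mathcal{N}_{\mathbf{T}}(y)\|
\;<\; \varepsilon,
\]
where the middle inequality uses that $\tau_{H_D}$ is an average of actions by $H_D$ (which act by permuting tensor indices and are thus isometries in any index-symmetric norm), together with the fact that $\hat{\mathbf{e}}_{\mathbf{T},b}$ is linear in its vector argument so the discrepancy is controlled directly by $\|F_{\mathbf{T}}-\mathcal{N}_{\mathbf{T}}\|$.

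There is no genuinely difficult step once Theorem \ref{represention} is available: that theorem does the heavy lifting by supplying both the explicit ReyNet-shaped decomposition and the continuity of the scalar components $F_{\mathbf{T}}$. The only subtlety to watch is the enlargement from $K$ to $K'$, ensuring each component MLP approximation is uniform on the full set of points at which it will actually be queried inside the reductive Reynolds averages; beyond this, the argument is a standard triangle-inequality bookkeeping over the finitely many tableaux in $\mathcal{T}_m$.
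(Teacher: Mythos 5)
Your proposal is correct and follows essentially the same route as the paper's own proof: invoke Theorem \ref{represention} to obtain the ReyNet-shaped decomposition with continuous components $F_{\mathbf{T}}$, enlarge $K$ to a compact set closed under the relevant group elements, approximate each $F_{\mathbf{T}}$ by an MLP via the classical universal approximation theorem, and propagate the error through the averages $\tau_{H_D}$ by the triangle inequality. The only (immaterial) differences are that the paper saturates $K$ under all of $S_n$ rather than just $\bigcup_D H_D$ and rescales $\varepsilon$ at the end instead of dividing by $|\mathcal{T}_m|$ up front.
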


The most important application, the case of the graph, corresponds to the $m = 2$ case. In this case, the Young diagrams are $\textbf{k}=(2)$ and $(1,1)$, and the basis tableaux is also uniquely numbered in the above two. In other words, the number of MLP $\mathcal{N}_T$ required to construct the model is two. For the Reynolds design, $H_1$ is of order $n$ and $H_2$ is of order $n(n-1)$, so the total computational complexity is $O(n^2)$. The number of summations that were originally required $n!$ times has been reduced to $n^2$ times, resulting in a significant reduction in the amount of calculation. We can see some experiments on these in the experiments section.

\section{Invariant ReyNets and Universality}

In this section, we define the invariant models using the equivariant models defined in the previous section. First, we will define invariant Reynolds nets. Then, we review the generators of invariant polynomials and define Reynolds dimension using them. Finally, we explain the universal approximation of invariant networks using Reynolds dimension. We will deal with the case where a general finite group $G$ acts on an $n$-dimensional vector space or a set of $n$ variables. Note that such formulations include the case of higher order tensor space as $G=S_n$.
\begin{df}[Invariant Reynolds Nets]
A invariant Reynolds network (invariant ReyNet) is a function $\mathcal{I}$ :
$\mathbb{R}^{n^l \times a} \rightarrow \mathbb{R}$ defined as
$$
\mathcal{I} = \mathcal{M} \circ \Sigma \circ \mathcal{E},
$$
where $\mathcal{E}: \mathbb{R}^{n^l \times a} \rightarrow \mathbb{R}^{n^m \times b}$ is a equivariant Reynolds network, 
$\Sigma$ is the orbit sum\footnote{
The orbit sum $\Sigma: \mathbb{R}^{[n]^m\times b} \to \mathbb{R}^{[n]^m/G\times b}$ is defined by $\Sigma(\mathbf{X})_{G\cdot \mathbf{u},\beta}:= \sum_{g\in G} x_{g\cdot \mathbf{u},\beta}$ for $\mathbf{X}=[x_{\mathbf{u},\beta}] \in \mathbb{R}^{[n]^m\times b}$, $\mathbf{u}\in [n]^m$, $\beta \in [b]$ and $G\cdot \mathbf{u} \in [n]^m/G$.}, 
and $\mathcal{M}$ is a Multi-Layer Perceptron (MLP).
\end{df}

We define the notion of generator of invariant polynomials, which is necessary to define Reynolds dimension.
\begin{df}[Generator of invariant polynomials]
Let $G$ be a finite group.
A set of invariant polynomials $r_1(x_1,..,x_n),..,r_s(x_1,..,x_n)$ are called {\it a generator of invariant polynomials} if for any invariant polynomial $f(x_1,..,x_n)$, there is a polynomial $p(y_1,..,y_s)$ such that $f(x_1,..,x_n)=p( r_1(x_1,..,x_n),..,r_s(x_1,..,x_n))$ holds.
\end{df}

The existence of such a generator is non-trivial for general group actions, but has been shown by Hilbert for finite groups or more generally linearly reductive groups \cite{hilbert1890}.
Under the preparation so far, we define the following Reynolds dimension.
\begin{df}[Reynolds dimension]
Let $G$ be a finite group.
The smallest natural number $d$ satisfying the following property is the Reynolds dimension of the group $G$. 
There exist polynomial $h_1,..,h_s$ of $d$-variable and an index subset $\{j_1,\ldots,j_d\}\subset [n]$  such that$\gamma_{G}(h_{1}(x_{j_1}, \ldots, x_{j_d})), \ldots, \gamma_{G}(h_{s}(x_{j_1}, \ldots, x_{j_d}))$ is a generator of the invariant polynomials.
\end{df}


Before discussing universality for invariant functions, let us review some notation. We define $\mathrm{Stab}_G([d])$ to be the set of elements of $G$ for which $x_{j_1}, \ldots, x_{j_d}$ are fixed. 
In addition, $[G/G']$ a complete system of representatives of $G/G'$ for a subgroup $G'\subset G$ is a set of order $|G/G'|$ that satisfies $G = \bigcup_{a \in [G/G']} a G'$. 

\begin{prp}\label{inv design}
In the same situation as Definition 18, $[G/\mathrm{Stab}_G([d])]$ is a Reynolds design of  $h_i$.
\end{prp}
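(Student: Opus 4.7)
My plan is to use the observation that the polynomial $h_i(x_{j_1},\ldots,x_{j_d})$ is, by design, invariant under the action of $\mathrm{Stab}_G([d])$, and to combine this invariance with a coset-counting argument. First I would verify the key invariance: since every $\sigma \in \mathrm{Stab}_G([d])$ fixes each of $j_1,\ldots,j_d$, the action convention $(\sigma\cdot x)_i = x_{\sigma^{-1}(i)}$ gives $(\sigma\cdot x)_{j_k} = x_{j_k}$ for all $k$, and therefore $h_i\bigl((\sigma g\cdot x)_{j_1},\ldots,(\sigma g\cdot x)_{j_d}\bigr) = h_i\bigl((g\cdot x)_{j_1},\ldots,(g\cdot x)_{j_d}\bigr)$ for every $g\in G$ and every $x\in\mathbb{R}^n$. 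In other words, the function $g\mapsto h_i(g\cdot x)$ is constant on each coset of $\mathrm{Stab}_G([d])$ in $G$.

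Next I would partition $G$ into its $|G/\mathrm{Stab}_G([d])|$ cosets of $\mathrm{Stab}_G([d])$, each of size $|\mathrm{Stab}_G([d])|$, and choose the transversal $[G/\mathrm{Stab}_G([d])]$ as representatives. By the invariance just established, each coset contributes $|\mathrm{Stab}_G([d])|$ identical copies of $h_i(a\cdot x)$ to the Reynolds sum, so
\[ \sum_{g\in G} h_i(g\cdot x) \;=\; |\mathrm{Stab}_G([d])|\;\sum_{a\in[G/\mathrm{Stab}_G([d])]} h_i(a\cdot x). \]
Dividing both sides by $|G| = |\mathrm{Stab}_G([d])|\cdot|[G/\mathrm{Stab}_G([d])]|$ immediately yields $\gamma_G(h_i) = \gamma_{[G/\mathrm{Stab}_G([d])]}(h_i)$, which is exactly the defining identity of a Reynolds design.

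The only nontrivial bookkeeping concerns on which side one quotients by $\mathrm{Stab}_G([d])$: the invariance computation above naturally gives constancy on right cosets $\mathrm{Stab}_G([d])\cdot g$, whereas $[G/\mathrm{Stab}_G([d])]$ is set up as a transversal of left cosets. This is a purely notational matter, since left- and right-coset transversals have the same cardinality $|G/\mathrm{Stab}_G([d])|$ and are interchanged by the bijection $g\mapsto g^{-1}$; choosing the representatives consistently makes the orbit-stabilizer count go through with either orientation. I expect this alignment of conventions to be the main (and only) hurdle; once it is fixed, the proposition is a direct consequence of the trivial action of $\mathrm{Stab}_G([d])$ on the $d$ distinguished coordinates.
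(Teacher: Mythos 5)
Your argument is correct and is essentially the paper's own proof: the paper also decomposes $G$ into cosets of $\mathrm{Stab}_G([d])$ and uses the $\mathrm{Stab}_G([d])$-invariance of $h_i$ to collapse the sum, phrasing the coset count as the factorization $\gamma_G = \gamma_{[G/\mathrm{Stab}_G([d])]}\circ\gamma_{\mathrm{Stab}_G([d])}$ together with $\gamma_{\mathrm{Stab}_G([d])}(h_i)=h_i$. Your explicit attention to the left- versus right-coset convention is a point the paper glosses over, but it does not change the substance of the argument.
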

\begin{thm}[Universality]\label{thm:universality-inv}
We assume that $G=S_n$.
Let $d$ be the Reynolds dimension of G.
Then, Reyonlds invariant nets constructed above for 
$\mathcal{E}: \mathbb{R}^{n^l \times a} \rightarrow \mathbb{R}^{n^d \times b}$ is a universal approximator for invariant functions $f: \mathbb{R}^{n^{\ell} \times a} \rightarrow \mathbb{R}^b$. 
More strongly, the input space of $\mathcal{E}$  can be replaced by a composite $ \mathcal{E}\circ \mathcal{Z}$ with the the zero padding map\footnote{
The zero padding map $\mathcal{Z}: \mathbb{R}^{d\times a} = \mathbb{R}^d \otimes \mathbb{R}^a \to \mathbb{R}^{n^{\ell}\times a} = \mathbb{R}^{n^{\ell}}\otimes \mathbb{R}^a$ is the linear map defined by 
$
    \mathcal{Z}((x_1, \ldots, x_d) \otimes \mathbf{e}_{\alpha}) 
    := (x_1, \ldots, x_d, \underbrace{0,\ldots, 0}_{n^\ell -d}) \otimes \mathbf{e}_{\alpha}
$
for $\alpha=1,2,\ldots,a$.
} $\mathcal{Z}: \mathbb{R}^{d\times a} \to \mathbb{R}^{n^{\ell}\times a}$.
\end{thm}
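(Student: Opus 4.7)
The plan is to combine three ingredients: density of invariant polynomials in continuous invariant functions (Stone--Weierstrass), the existence and shape of the Reynolds-dimension generators guaranteed by the definition, and the fact that the architecture $\mathcal{M}\circ\Sigma\circ\mathcal{E}$ is rich enough to realize those generators.

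First I would fix a $G$-invariant compact set $K\subset\mathbb{R}^{n^\ell\times a}$ and a continuous invariant target $f:\mathbb{R}^{n^\ell\times a}\to\mathbb{R}^b$. By the invariant form of Stone--Weierstrass (apply the classical Weierstrass theorem on $K$ and average the approximants by $\gamma_G$), $f$ is uniformly approximable on $K$ by $G$-invariant polynomials, componentwise. It therefore suffices to approximate a single scalar $G$-invariant polynomial $P$. By the definition of the Reynolds dimension $d$, there exist polynomials $h_1,\ldots,h_s$ in $d$ variables and indices $j_1,\ldots,j_d\in[n]$ such that $r_i:=\gamma_G(h_i(x_{j_1},\ldots,x_{j_d}))$ generate the ring of $G$-invariants. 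Hence $P=p(r_1,\ldots,r_s)$ for some polynomial $p$, and by Proposition \ref{inv design} each $r_i$ can be computed as the sum of $h_i$ over the Reynolds design $[G/\mathrm{Stab}_G([d])]$, so no full $G$-sum is needed at implementation time.

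Next I would realize each $r_i$ as a designated scalar read-off of $\Sigma\circ\mathcal{E}$. Set $m=d$ and pick a basis tableau $\mathbf{T}^\ast\in\mathcal{T}_{d,d}$ supported on the chosen index set $\{j_1,\ldots,j_d\}$. Letting $\mathcal{N}_{\mathbf{T}^\ast}$ approximate $x\mapsto h_i(x_{j_1},\ldots,x_{j_d})$ and setting $\mathcal{N}_{\mathbf{T}}\equiv 0$ for every other tableau, the single equivariant branch $\tau_{H_d}(\mathcal{N}_{\mathbf{T}^\ast}\circ\hat{\mathbf{e}}_{\mathbf{T}^\ast,b})$ of the ReyNet $\mathcal{E}$ has the property that one specific coordinate of $\Sigma\circ\mathcal{E}$ recovers $\gamma_G(h_i(\cdot))$ up to a known combinatorial multiplicity (absorbed into $\mathcal{N}_{\mathbf{T}^\ast}$). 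Running $i=1,\ldots,s$ in parallel by widening the feature dimension $b$ (or taking a direct sum of such ReyNets) yields an invariant vector whose entries approximate $r_1,\ldots,r_s$ uniformly on $K$. Applying MLP universality to $\mathcal{M}$ to approximate $p$ on the (compact) image of $(r_1,\ldots,r_s)$ then yields a uniform approximation of $P$, and hence of $f$. For the zero-padding strengthening, I would observe that each $h_i$ genuinely depends only on the $d$ coordinates $x_{j_1},\ldots,x_{j_d}$, so $\mathcal{N}_{\mathbf{T}^\ast}$ factors through the projection $\mathbb{R}^{n^\ell\times a}\to\mathbb{R}^{d\times a}$ onto those coordinates; composing with $\mathcal{Z}$ inverts this projection on the relevant coordinate subspace, so the same universality argument goes through verbatim with $\mathcal{E}\circ\mathcal{Z}$ in place of $\mathcal{E}$.

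The hard part will be the identification step in the third paragraph: verifying that the orbit sum of the single equivariant branch $\tau_{H_d}(\mathcal{N}_{\mathbf{T}^\ast}\circ\hat{\mathbf{e}}_{\mathbf{T}^\ast,b})$, read at a specific coordinate of $[n]^d/G$, really equals a known multiple of $\gamma_G(h_i(x_{j_1},\ldots,x_{j_d}))$. This needs the decomposition $G=\bigcup_{g\in[G/\mathrm{Stab}_G([d])]}g\cdot\mathrm{Stab}_G([d])$ from Proposition \ref{inv design}, combined with the basis-tableau decomposition $\tilde{\mathcal{T}}_{m,D}=\bigcup_{g\in H_D}g^{-1}\mathcal{T}_{m,D}$ used inside the proof of Theorem \ref{represention}, with a careful bookkeeping of stabilizers to control the multiplicity. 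Once that calculation is done, the rest is Stone--Weierstrass plus two applications of standard MLP universality.
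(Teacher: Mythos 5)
Your proposal is correct and follows essentially the same route as the paper's proof: Stone--Weierstrass plus $\gamma_G$-averaging to reduce to invariant polynomials, the Reynolds-dimension generators $r_i=\gamma_G(h_i)$ written via the coset decomposition $G=\bigcup_{g\in[G/\mathrm{Stab}_G([d])]}g\cdot\mathrm{Stab}_G([d])$ so that $r_i=\gamma_{H_d}(h_i)$, and then two applications of MLP universality (for $h=(h_1,\ldots,h_s)$ and for $p$). The step you flag as "the hard part" is exactly the one the paper handles by the identity $\gamma_G=\Sigma\circ\tau_G$ combined with $\mathrm{Stab}_G([d])$-invariance of the $h_i$, and your more explicit tableau-level bookkeeping is a refinement of, not a departure from, that argument.
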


\section{Reduction of input variables}
So far, we have taken the input of $\mathcal{N}$ as an $n^{\ell}$-dimensional space. However, when $n$ or $\ell$ are large, this can be an obstacle to the calculation. Therefore, we take $d$-dimensional subspace as an input by Theorem \ref{thm:universality-inv}. This is called the $d$-reduced equivariant (invariant) Reynolds network. For example, consider the case $\ell=2$ and assume that the set of invariant polynomials $\mathbb{R}[\textbf{x}]^{S_n}$ has Reynolds dimension $4$ by the valuables $x_{1,1},x_{1,2},x_{2,1}, x_{2,2}$, then  by Theorem \ref{thm:universality-inv}, we can restrict the input space of $f_\textbf{T}$ in our model to $x_{1,1},x_{1,2},x_{2,1}, x_{2,2}$ and still preserve universality for invariant functions. 
The reduced Reynolds networks give us the great benefit of generalization over $n$. 
In other words, by transferring $f_\textbf{T}$ learned for small $n$ to $f_\textbf{T}$ for large $n$, we can obtain a model for large $n$. See Figure 3 for more details.
When we consider a practical task, for example, the problem of classifying graphs, it is not a simple matter to determine how many Reynolds dimensions it has. However, this problem has been formulated mathematically as a problem in higher-order tensor spaces, and we expect that the Reynolds dimension will be independent of $n$ in this case.
\begin{conjecture}
Let $G$ be an $S_n$ acting on $\mathbb{R}^{n^{\ell}}$ as in Section 4. For arbitrary enough large $n$, the Reynolds dimension $d(n)$ of G is independent of $n$.
\end{conjecture}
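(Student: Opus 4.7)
The plan is to combine the $n$-independent combinatorial parametrization of $S_n$-orbits on $[n]^\ell$ with an inductive Newton-identity-style argument to produce a fixed, $n$-independent template of coordinate variables whose $\gamma_{S_n}$-images generate the full invariant ring. The starting observation is that for $n\ge m\ell$, the orbits of $S_n$ on $m$-tuples in $[n]^\ell$ are in canonical bijection with the set partitions of $[m\ell]$ recording which of the $m\ell$ index slots coincide, and this bijection is independent of $n$. Hence each symmetrized monomial $\gamma_{S_n}(x_{\mathbf{u}_1}\cdots x_{\mathbf{u}_m})$ is determined by an $n$-independent combinatorial type, and the question becomes whether a bounded number of these types suffice as ring generators when realized through Reynolds operators applied to a bounded coordinate template.

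First I would fix a natural number $k=k(\ell)$ and take as template the set $V_k:=\{x_{\mathbf{u}}:\mathbf{u}\in[k]^\ell\}$ of $k^\ell$ coordinate variables, corresponding to a concrete choice of $d=k^\ell$ indices $j_1,\ldots,j_d$ in the definition of Reynolds dimension. Any monomial $M=x_{\mathbf{u}_1}\cdots x_{\mathbf{u}_m}$ whose multi-indices collectively use at most $k$ distinct numbers in $[n]$ can be moved into $V_k$ by an element of $S_n$, so its symmetrization $\gamma_{S_n}(M)$ equals the symmetrization of a monomial in $V_k$ and therefore lies in the image of the template. The nontrivial case is a symmetrized monomial $\gamma_{S_n}(M_\pi)$ indexed by a set partition $\pi$ of $[m\ell]$ having more than $k$ blocks; for these I would argue by induction down the partition lattice that such a symmetrization is a polynomial in lower-arity symmetrizations, generalizing Newton's identity $p_1^2 = p_2 + \sum_{i\ne j} x_i x_j$, in which a product of low-arity Reynolds images produces the target plus correction terms of strictly lower arity or lower degree, which by induction already live in the subring generated by $V_k$-templates.

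The hardest step will be determining the right $k(\ell)$ and actually closing the inductive loop on the partition lattice of $[m\ell]$ for arbitrary $m$. For $\ell=1$ the scheme specializes to the classical fact that power sums $p_k=\gamma_{S_n}(x_1^k)$ generate the symmetric functions, giving $k(1)=1$ and Reynolds dimension $1$, which is a reassuring base case consistent with the example $\tau_{S_n}(x_1^i)$ discussed after Definition 6. For $\ell\ge 2$ one must verify that the error terms produced by multiplying two symmetrized monomials do not inflate the arity unboundedly, and identifying explicit multiplicative relations in the multi-symmetric-function ring is genuinely intricate. A possible alternative route is to invoke representation stability and finitely generated FI-algebra machinery in the spirit of Church--Ellenberg--Farb and Sam--Snowden: showing that the sequence $n\mapsto \mathbb{R}[\mathbb{R}^{n^\ell}]^{S_n}$ is a finitely generated FI-algebra would immediately yield a uniform bound on the arities of a generating set, which would translate into the desired $n$-independent bound on the Reynolds dimension and settle the conjecture.
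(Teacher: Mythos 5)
The paper does not prove this statement: it is explicitly left as an open conjecture (the authors only say they ``expect'' the Reynolds dimension to be independent of $n$), so there is no proof of record to compare yours against. Your proposal therefore has to be judged on its own, and as it stands it is a research plan with the decisive step missing rather than a proof.

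The sound part is the stabilization of orbit types: for $n\ge m\ell$ the $S_n$-orbits of degree-$m$ monomials on $\mathbb{R}^{n^\ell}$ are indexed by $n$-independent combinatorial data, and any monomial supported on at most $k$ distinct indices of $[n]$ can be translated into a fixed template $\{j_1,\dots,j_d\}$ without changing its $\gamma_{S_n}$-image. The genuine gap is the inductive step you yourself flag: showing that a symmetrized monomial whose type uses more than $k$ distinct indices lies in the subring generated by the bounded-arity ones. The Newton-identity model works for $\ell=1$ because $p_ap_b=p_{a+b}+\sum_{i\ne j}x_i^ax_j^b$ lets you solve for the unique arity-$2$ type in terms of arity-$1$ data, but for $\ell\ge 2$ a product of two bounded-arity symmetrizations expands into \emph{many} distinct orbit types of the same top arity (the ways of overlapping the two index supports), so you must invert a transition matrix between products of low-arity generators and high-arity orbit types; nothing in the proposal shows this matrix is triangular or even that the relevant types are all reachable. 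For $\ell=2$ this invariant ring is the algebra of isomorphism invariants of weighted graphs, where the structure of generators is notoriously subtle, and it is not known (and is exactly the content of the conjecture) that a generating set of uniformly bounded support size exists. The correction terms in your expansion also have arity up to $k_1+k_2-1$, not $\le\max(k_1,k_2)$, so the downward induction on the partition lattice does not obviously close. The appeal to FI-algebra finite generation is a reasonable alternative strategy, but citing the machinery is not the same as verifying that $n\mapsto\mathbb{R}[x_{\mathbf{u}}:\mathbf{u}\in[n]^\ell]^{S_n}$ is a finitely generated FI-algebra in the required sense; that verification would essentially be the theorem. In short: correct reductions at the periphery, but the core of the conjecture remains unproved in your write-up.
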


\section{Experiments}

\begin{table}[]
\centering
\caption{Results of comparison to a baseline method}
\fontsize{8pt}{9pt}\selectfont
\begin{tabular}{@{}lllllllll@{}}
\toprule
Task                & \multicolumn{4}{l}{Symmetry}                                                  & \multicolumn{4}{l}{Diagonal}                                                  \\
\textit{\textbf{n}} & 3                 & 5                 & 10                & 20                & 3                 & 5                 & 10                & 20                \\ \midrule
FNN               & 1.730e-4          & 9.180e-4          & 1.454e-3          & 3.0583          & 1.295e-4          & 2.655e-4          & 1.148e-4          & 1.081e-1          \\
\citet{maron2018invariant}               & 6.600e-3          & 3.786e-3          & 9.294e-4          & 4.471e-3          & 2.065e-3          & 2.266e-3          & 4.098e-3          & 4.743e-4          \\
ReyNet (ours)     & 2.147e-4          & 3.960e-4          & 1.408e-3          & 3.151e-3          & 1.007e-4          & 2.472e-4          & 6.635e-4          & 1.112e-4          \\    
4red-ReyNet (ours)         & \textbf{8.544e-5} & \textbf{4.889e-5} & \textbf{7.529e-5} & \textbf{6.554e-5} & \textbf{6.947e-5} & \textbf{1.932e-5} & \textbf{5.568e-5} & \textbf{3.566e-5} \\ \bottomrule
\toprule
Task                & \multicolumn{4}{l}{Power}                                                     & \multicolumn{4}{l}{Trace}                                                     \\
\textit{\textbf{n}} & 3                 & 5                 & 10                & 20                & 3                 & 5                 & 10                & 20                \\ \midrule
FNN               & 2.586          & 3.091e+1          &    5.756e+1       & 6.268e+2          & 2.821e-4          & 1.135e-3          & 9.529e-3          & 5.149e-2          \\
\citet{maron2018invariant}               & 4.036e-1          & 3.462e-1          & 7.062e-1          & 4.735e-1          & 1.241e-3          & 6.696e-3          & 3.663e-2          & 5.527e-2          \\
ReyNet (ours)     & 3.798e-1          & 1.257             & 3.620             & 3.065             & 1.884e-3          & 2.949e-3          & 4.275e-2          & 5.338e-2          \\
4red-ReyNet (ours)         & \textbf{1.204e-1} & \textbf{1.330e-1} & \textbf{1.217e-1} & \textbf{1.165e-1} & \textbf{3.491e-4} & \textbf{1.914e-3} & \textbf{6.758e-3} & \textbf{1.220e-2} \\ \bottomrule
\end{tabular}
\label{tab:exp1}
\end{table}

We evaluated our models compared to fully-connected neural nets (FNNs) and \citet{maron2018invariant} using synthetic datasets for equivariant and invariant tasks.
In the experiments, we used two models; Reynolds Networks (ReyNets) and $4$-reduced Reynolds Networks (4red-ReyNets).

\subsection{Datasets}
We created four types of synthetic datasets for the comparison.
Given an input matrics data $\mathbf{A} \in \mathbb{R}^{n \times n}$, each task is defined by:
\begin{itemize}
    \item \textbf{Symmetry}: projection onto the symmetric matrices $F(\mathbf{A})=\frac{1}{2}(\mathbf{A} + \mathbf{A}^\top)$,
    \item \textbf{Diagonal}: diagonal extraction $F(\mathbf{A})=diag(\mathbf{A})$,
    \item \textbf{Power}: computing each squared element $F(\mathbf{A})=[A_{i,j}^2]$,
    \item \textbf{Trace}: computing the trace $F(\mathbf{A})=tr(\mathbf{A})$,
\end{itemize}
where the task function $F$ is equivariant in \textit{symmetry}, \textit{diagonal}, and \textit{power}, and is invariant in \textit{trace}.
Following \citet{maron2018invariant}, we sampled i.i.d. random matrices $\mathbf{A}$ with uniform distribution in $[0, 10]$; then we transformed it by the task function $F$.
In our experiments, we provided $n \in \{ 3, 5, 10, 20\}$, and the size of training dataset and test dataset is 1000 respectively.

\subsection{Implementation Details}
For equivariant tasks, we applied equivariant ReyNets.
For invariant tasks, we applied invariant ReyNets following the architecture of the invariant model proposed by \citet{maron2018invariant};
an equivariant ReyNet is followed by max pooling\footnote{This operation outputs the max value of diagonal and non-diagonal elements of an input matrix.}, and fully-connected layers.
We adopted the ReLU function as activation.
We used Adam optimizer and set learning rate as 1e-3 and weight decay as 1e-5.
Batch size is 100.
Note that the models of \citet{maron2018invariant} are re-implemented by PyTorch in reference to the author's implementation in Tensorflow. 

In terms of an objective function, we modified mean squared error (MSE) loss based on Theorem \ref{represention}.
In a straightforward manner, equivariant tasks are regression task so that MSE loss is selected to decrease the gap of all elements between an output matrix and ground truth matrix: $\ell_{std}\colon \mathbb{R}^{n\times n} \times \mathbb{R}^{n\times n} \rightarrow \mathbb{R}$.
However, thanks to Theorem \ref{represention}, ReyNet does not require calculating whole elements but the gap of 1st row of 1st column element and 1st row of 2nd column element: $\ell_{corner}\colon \mathbb{R}^{1\times 2} \times \mathbb{R}^{1\times 2} \rightarrow \mathbb{R}$.
In this paper, we call the latter objective function as corner MSE loss.

\subsection{Results}
\begin{wraptable}[10]{r}[0mm]{50mm}
 \caption{Comparison of Objective Function}
 \label{tab:exp2}
 \centering
 \small
    \begin{tabular}{l|ll}
    \toprule
        $n$    & mse & corner mse \\ \midrule
        3 & 1.438e-4 & \textbf{1.249e-4} \\
        5 & \textbf{4.912e-5} & 9.375e-5 \\
        10 & 1.157e-4 & \textbf{6.487e-5} \\
        20 & 1.608e-4 & \textbf{8.537e-5} \\ \bottomrule
    \end{tabular}
\end{wraptable}
Table \ref{tab:exp1} shows the result of synthetic datasets, which is the average of MSE of five different seeds.
As a result, our proposed models outperform \citet{maron2018invariant} overall.
Interestingly, with regards to the \textit{power} task, FNN and ReyNet are worse than other models.
The function $F$ of \textit{power} task amplifies the output error to large value.
Simultaneously, when the number of parameters of the model is very large along with the size of $n$, the output of the model becomes sensitive.
Therefore, we considered that the error became large due to the combination with large neural networks and the \textit{power} function.

Moreover, we validated the effect of an objective function.
We trained 4r-ReyNets with standard MSE loss and Corner MSE loss respectively.
Table \ref{tab:exp2} shows the result of using each objective function.
Accordingly, Corner MSE loss achieved lower error than standard MSE loss except for $n=5$ case.
Therefore, using Corner MSE loss is a good choice for equivariant tasks.

\subsection{Generalization Error}
\begin{figure}[t]
    \centering
  \begin{minipage}[b]{0.45\hsize}
    \centering
    \includegraphics[width=0.95\columnwidth]{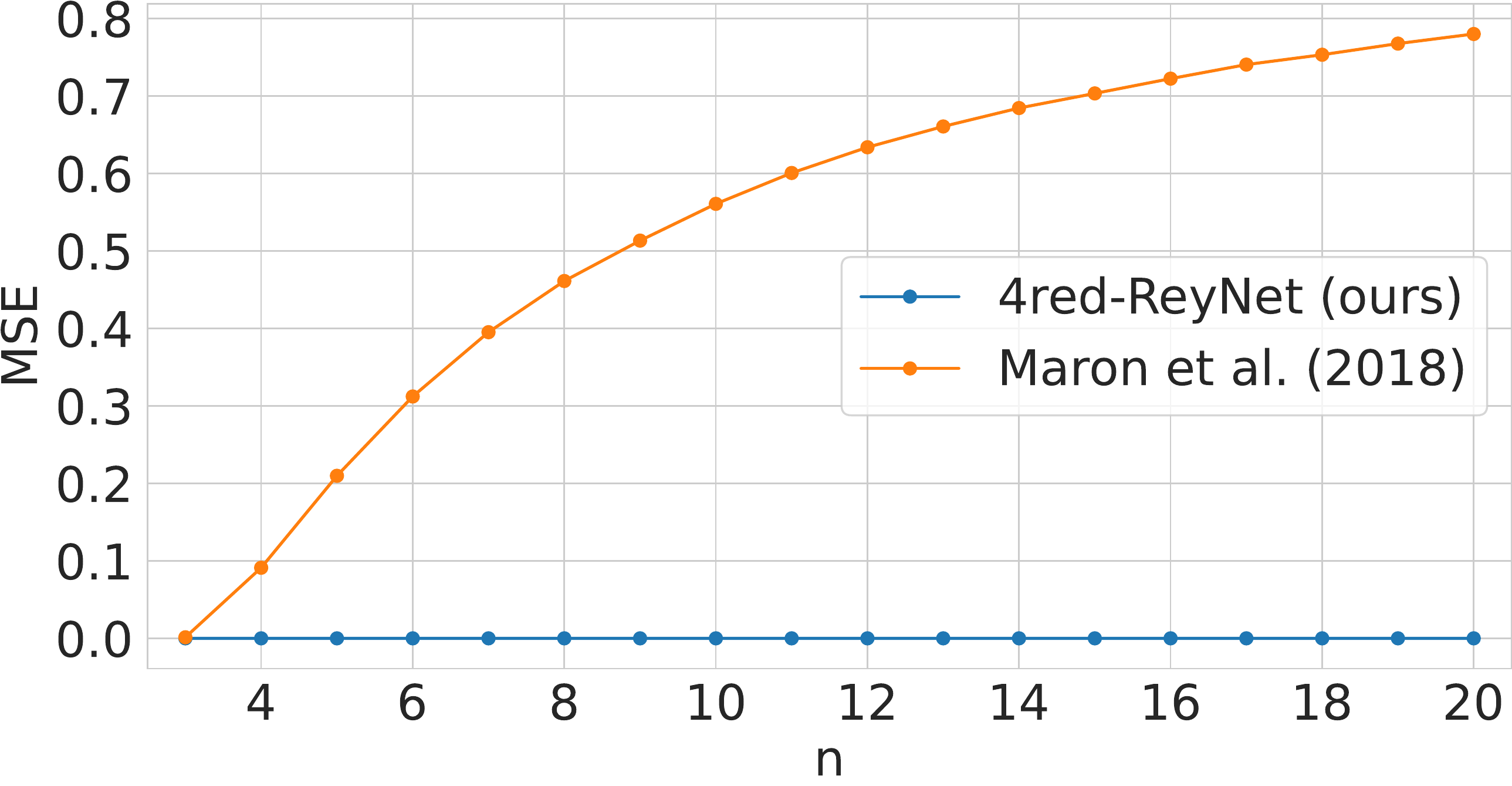}
    \subcaption{Symmetry}\label{gen-sym}
  \end{minipage}
  \begin{minipage}[b]{0.45\hsize}
    \centering
    \includegraphics[width=0.95\columnwidth]{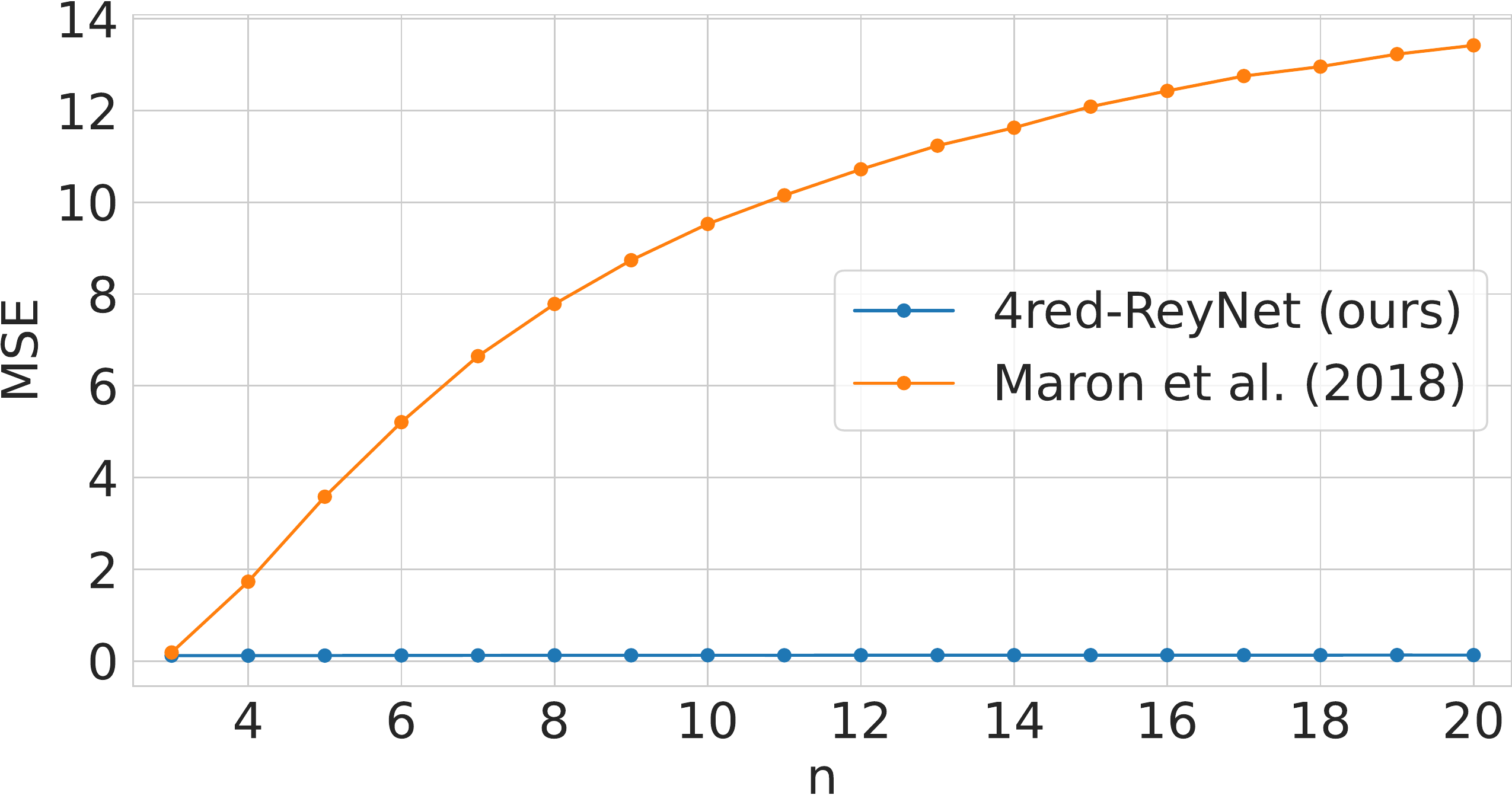}
    \subcaption{Power}\label{gen-diag}
  \end{minipage}
  \caption{Generalization Error}\label{fig:generalization}
\end{figure}
Notably, our 4red-ReyNet does not depend on the size of inputs $n$.
In order to test the generalization performance, we trained 4red-ReyNet with $n_{train}=3$ dataset and then validated the MSE on $n_{test}\in \{3, 4, \dots, 20\}$ datasets. 
The results are depicted in Figure \ref{fig:generalization}.
We can see that our 4-reduced ReyNet is generalized to the input size.
Note that with regard to invariant tasks, we confirmed the model is not generalized to the tasks as \citet{maron2018invariant} has reported.


\section{Conclusion}
We considered equivariant and invariant neural networks over higher order tensor spaces. 
The method of converting deep neural nets to equivariant neural nets using Reynolds operators had some difficulties in terms of computational complexity, which can be successfully avoided by using Reynolds designs.
Then, we constructed equivariant Reynolds networks (equivariant ReyNets) based on the Reynolds designs and proved the universality.
Similarly, we also introduced Reynolds designs induced by Reynolds dimension in the invariant case.
Then, we also constructed invariant Reynolds networks (invariant ReyNets) and proved the universality as well.
Furthermore, we showed that input variables of ReyNets can be reduced based on the Reynolds dimension.
In the section of numerical experiments,
we showed that equivariant and invariant ReyNets performs better than or comparable to existing models.
Moreover, we observed that ReyNets with a few input variables can generalize well to the cases with more input variables.

\newpage
\bibliography{sannai_229.bib}
\bibliographystyle{iclr2022_conference}

\newpage
\appendix

\section{Proof of (\ref{eq:representation})}
We provide the detailed calculation in (\ref{eq:representation}).
\begin{align*}
F(x) 
&=\sum_{\mathbf{u}\in\left[n\right]^{m}} f_{\mathbf{u}} \cdot \hat{e}_{\mathbf{u}, b}(x) \\
&=\sum_{1\leq D\leq m}\sum_{g \in H_{D}} \sum_{\textbf{T} \in \mathcal{T}_{m, D}} f_{g^{-1} \cdot \textbf{T}} \cdot \hat{e}_{g^{-1} \cdot \textbf{T}, b}(x) \\
&=\sum_{1\leq D\leq m}\sum_{\textbf{T} \in \mathcal{T}_{m, D}} \sum_{g \in H_{D}} f_{g^{-1} \cdot \textbf{T} } \cdot \hat{e}_{g^{-1} \cdot \textbf{T} , b}(x) \\
&=\sum_{1\leq D\leq m}\sum_{\textbf{T}  \in \mathcal{T}_{m, D}} \sum_{g \in H_{D}} f_{ \textbf{T} }\cdot \hat{e}_{g^{-1}\cdot \textbf{T} , b}(g\cdot x) \\
&=\sum_{1\leq D\leq m}\sum_{\textbf{T}  \in \mathcal{T}_{m, D}} \sum_{g \in H_{D}}g^{-1}\cdot (f_{\textbf{T} }\cdot \hat{e}_{\textbf{T} , b}(g\cdot x)) \\
&=\sum_{1\leq D\leq m}\sum_{\textbf{T}  \in \mathcal{T}_{m, D}} \sum_{g \in H_{D}}\frac{1}{|H_D|}g^{-1}\cdot (|H_D|f_{\textbf{T} }\cdot \hat{e}_{\textbf{T} , b}(g\cdot x)) \\
&=\sum_{1\leq D\leq m}\sum_{\textbf{T}  \in \mathcal{T}_{m, D}} \tau_{H_{D}}\left( |H_D|f_{\textbf{T} } \cdot \hat{e}_{\textbf{T} , b}\right)(x)\\
&=\sum_{1\leq D\leq m}\sum_{\textbf{T}  \in \mathcal{T}_{m, D}} \tau_{H_{D}}\left( F_{\textbf{T} } \cdot \hat{e}_{\textbf{T} , b}\right)(x),
\end{align*}
where the fourth equality follows from (\ref{eq:component}) and the last equality follows by putting $F_\textbf{T} :=|H_D|f_\textbf{T}$.
\hspace{\fill}$\blacksquare$

\section{Proof of Theorem \ref{thm:universality-equiv}}

By Theorem \ref{represention}, we have continuous maps $f_{\mathbf{T}}: \mathbb{R}^{n^l \times a} \rightarrow \mathbb{R}^b$ satisfying $F= \sum_{D=1}^m \sum_{\mathbf{T} \in \mathcal{T}_{m,D}} \tau_{H_D}(f_{\textbf{T}
}\circ \hat{\mathbf{e}}_{\mathbf{T}, b}),$ for standard Young tableaux $\mathbf{T} \in \mathcal{T}_{m,D}$. Since $K$ is a compact set and $S_n$ is a finite group, we may assume that $K$ is closed under $S_n$-action by taking $\bigcup_{g\in S_n} g\cdot K$.Then for any $\varepsilon$, we have an MLP $\mathcal{N
}_\textbf{T}$ which approximates $f_\textbf{T}$, namely $\left\|N_{T}-f_{T}\right\|_{K}<\varepsilon$ holds. Hence by the definition of our invariant model, we have
\begin{align*}
&\left\|F-\sum_{D=1}^{m} \sum_{\textbf{T} \in \mathcal{T}_{m, D}} \tau_{H_{D}}\left(\mathcal{N}_{T} \cdot \hat{e}_{\textbf{T}, b}\right)\right\|_{K}\\
&=\| \sum_{D=1}^{m} \sum_{T \in \mathcal{T}_{m,D}} \tau_{H_{D}}\left(f_{T} \cdot \hat{e}_{\textbf{T}, b}\right)-\sum_{D=1}^{m} \sum_{T \in \mathcal{T}_{m, D}} \tau_{H_D}\left(N_{T} \hat{e}_{\textbf{T}, b} \right)\|_K  \\ 
 &\leq \sum_{D=1}^{m} \sum_{T \in \mathcal{T}_{m, D}} \| \tau_{H_{D}}\left(f_{T} \cdot \hat{e}_{\textbf{T}, b}\right)-\tau_{H_{D}}\left(N_{T} \cdot \hat{e}_{T, b}\right) \|_{K} \\
&\leq \sum_{D=1}^{m} \sum_{T \in \mathcal{T}_{m, D}}\left\|\tau_{H_{D}}\left(\left(f_{T}-N_{T}\right) \cdot \hat{e}_{T, b}\right)\right\|_{K} \\
&\leq \sum_{D=1}^{m} \sum_{T \in \mathcal{T}_{m, D}} \| \sum_{g \in H_{D}} \frac{1}{\left|H_{D}\right|}\left(\left(f_{T}-N_{T}\right) \cdot \hat{e}_{g^{-1}\cdot T, b}(g \cdot -) \right\|_{K} \\
&\leq \sum_{D=1}^{m} \sum_{T \in \mathcal{T}_{m, D}} \sum_{g \in H_{D}} \frac{1}{\left|H_{D}\right|}\left\|\left(f_{T}-N_{T}\right) \cdot \hat{e}_{g^{-1} \cdot T, b} (g \cdot-)\right\|_K\\
&\leq \sum_{D=1}^{m} \sum_{T \in \mathcal{T}_{m, D}} \sum_{g \in H_{D}} \frac{1}{\left|H_{D}\right|}\left\|\left(f_{T}-N_{T}\right) \cdot \hat{e}_{g^{-1} \cdot T, b} (-)\right\|_K\\
&\leq \sum_{D=1}^{m} \sum_{T \in \mathcal{T}_{m, D}} \sum_{g \in H_{D}} \frac{1}{\left|H_{D}\right|}\left\|\left(f_{T}-N_{T}\right)\right\|_K \\
&\leq \sum_{D=1}^{m} \sum_{T \in \mathcal{T}_{m, D}} \sum_{g \in H_{D}} \frac{1}{\left|H_{D}\right|}\varepsilon \\
&\leq m |\mathcal{T}_{m,D}| \varepsilon.
\end{align*}
By replacing $\varepsilon$,  we obtain $\left\|F-\sum_{D=1}^{m} \sum_{\textbf{T} \in \mathcal{T}_{m, D}} \tau_{H_{D}}\left(\mathcal{N}_{T} \cdot \hat{e}_{\textbf{T}, b}\right)\right\|_{K}< \varepsilon$.
\hspace{\fill}$\blacksquare$

\section{Proof of Proposition \ref{inv design} and Theorem \ref{thm:universality-inv}} 
We use the following theorem.
\begin{thm}[Hilbert finiteness theorem (\cite{hilbert1890})]\label{Hilbert}
Let $G$ be a finite group or, more generally, a linearly reductive group. In this case, there is always a generator of G-invariant polynomials.
\end{thm}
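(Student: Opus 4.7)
\section*{Proof Proposal for Theorem \ref{Hilbert} (Hilbert's Finiteness Theorem)}

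The plan is to follow Hilbert's classical two-step argument: combine the Noetherian property of the polynomial ring (Hilbert basis theorem) with the Reynolds operator $\gamma_G$ defined earlier in the paper to produce a finite algebra-generating set for the invariant subring. Throughout, let $R=k[x_1,\dots,x_n]$ carry the natural grading $R=\bigoplus_{d\ge 0} R_d$ induced by total degree, and let the $G$-action preserve this grading (by linearity of the representation). Then the invariant subring $R^G=\bigoplus_{d\ge 0} R^G_d$ is itself graded, and the map $\gamma_G\colon R\to R^G$ from (\ref{eq:equiv}) is a degree-preserving $k$-linear projection; crucially, it satisfies the Reynolds identity
\[
\gamma_G(af)=a\,\gamma_G(f)\qquad (a\in R^G,\ f\in R),
\]
which follows immediately from $g\cdot(af)=a\,(g\cdot f)$ when $a$ is invariant. (For a finite group this just uses $\gamma_G(f)=\tfrac{1}{|G|}\sum_g g\cdot f$; the generalization to linearly reductive groups uses the existence of a $G$-equivariant projection onto invariants, which is how ``linearly reductive'' is defined.)

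The first step is to form the \emph{Hilbert ideal} $I\subset R$ generated (as an ideal of $R$, not of $R^G$) by all homogeneous invariants of positive degree, i.e.\ $I=(R^G_+)\cdot R$ where $R^G_+=\bigoplus_{d>0}R^G_d$. Since $R$ is Noetherian by the Hilbert basis theorem, the ideal $I$ is finitely generated, and because $I$ is homogeneous and generated by invariants, we may choose the generators to be finitely many homogeneous invariants $f_1,\dots,f_s\in R^G$ of positive degrees $d_1,\dots,d_s$. I claim these $f_i$ generate $R^G$ as a $k$-algebra, which is exactly the conclusion of the theorem (taking $r_i=f_i$ in the paper's definition of generator).

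The second step is the algebra-generation claim, proved by strong induction on degree. For $d=0$, $R^G_0=k$ is trivially in $k[f_1,\dots,f_s]$. For $d\ge 1$, pick any homogeneous $f\in R^G_d$. By definition $f\in I$, so $f=\sum_{i=1}^{s} h_i f_i$ with $h_i\in R$ homogeneous of degree $d-d_i<d$. Applying $\gamma_G$ and using the Reynolds identity together with $\gamma_G(f)=f$, we get
\[
f=\gamma_G(f)=\sum_{i=1}^{s}\gamma_G(h_i)\,f_i,
\]
where each $\gamma_G(h_i)\in R^G$ is homogeneous of degree $<d$. By the inductive hypothesis, every $\gamma_G(h_i)$ is a polynomial in $f_1,\dots,f_s$, so $f$ is as well. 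Since $R^G$ is spanned by its homogeneous components, this proves $R^G=k[f_1,\dots,f_s]$.

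The main obstacle, and the only place where more than formal manipulation is needed, is the existence of the $R^G$-linear Reynolds operator itself: this is what rescues the naive argument, which would only give that $I=(f_1,\dots,f_s)$ as an $R$-ideal rather than $R^G=k[f_1,\dots,f_s]$ as a $k$-algebra. For finite $G$ (the only case actually needed in the rest of the paper) this is immediate from averaging and invertibility of $|G|$ in the base field, so no real difficulty arises; the ``linearly reductive'' extension is essentially a definition and can be stated as a remark without further proof.
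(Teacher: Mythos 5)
Your argument is correct, and it is essentially Hilbert's original proof: form the Hilbert ideal $I=(R^G_+)\cdot R$, invoke the Hilbert basis theorem to extract finitely many homogeneous invariant generators $f_1,\dots,f_s$ of $I$, and then bootstrap from ideal generation to algebra generation by induction on degree using the $R^G$-linearity of the Reynolds operator $\gamma_G$. There is nothing in the paper to compare against: Theorem \ref{Hilbert} is stated there purely as a citation to \cite{hilbert1890} and is used as a black box in the proof of Theorem \ref{thm:universality-inv}, so you have supplied a proof where the authors supplied none. Two small points worth making explicit if this were to be included: (i) the step ``$I$ is finitely generated, hence generated by finitely many elements of the original homogeneous invariant generating set'' uses the standard fact that each of the finitely many generators is an $R$-combination of finitely many elements of that set; (ii) the averaging definition of $\gamma_G$ requires $|G|$ to be invertible in the base field, which holds here since the paper works over $\mathbb{R}$ (in positive characteristic dividing $|G|$ one would instead need Noether's argument). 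Your closing remark correctly identifies the Reynolds identity $\gamma_G(af)=a\,\gamma_G(f)$ as the crux that upgrades ideal generation to algebra generation.
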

Let $f: \mathbb{R}^{n^{\ell} \times a} \rightarrow \mathbb{R}^b$ be a continuous invariant function. 
By replacing $K$ with $\bigcup_{g \in G} g \cdot K$, we may assume that $K$ is closed under the action of $G$. 
Then by the Stone-Weierstrass theorem, there exists a polynomial $\hat{f}: \mathbb{R}^{n^{\ell} \times a} \rightarrow \mathbb{R}^b$  which approximate $f$ with arbitrary precision on $K$. Put $\tilde{f}= \gamma_G({\hat{f}})$, then
\begin{align*}
\left\|f(\textbf{x})-\gamma_{G}(\hat{f}(\textbf{x}))\right\|_{K} &=\frac{1}{\left|G\right|}\left\||G| f(\textbf{x})-\sum_{g \in G} f(g\cdot \textbf{x})\right\|_{K}\\
&\leq \frac{1}{\left|G\right|}\sum_{g \in G}\|f(\textbf{x})-\hat{f}(g \cdot \textbf{x})\|_{K} \\
&=\frac{1}{|G|} \sum_{g\in G}\left\|f\left(g\cdot \textbf{x}\right)-\hat{f}(g \cdot \textbf{x})\right\|_{K}\\
&\leq \frac{1}{\left|G\right|} \sum_{g \in G} \varepsilon=\varepsilon,
\end{align*}
where we used the property $f(\textbf{x}) = f(g\cdot \textbf{x})$  in the third equation. By Theorem \ref{Hilbert}, we have a generator of invariant polynomials $r_{1}, \ldots, r_s$. From the definition of generator, there exists a polynomial $P$ and $\tilde{f}$ can be written in the form $\tilde{f}\left(x_{1}, \ldots, x_{n^la}\right) = P(r_1\left(x_{1}, \ldots, x_{n^la}\right),..,r_s\left(x_{1}, \ldots, x_{n^la}\right))$. 
By the assumption of Reynolds dimension,  $r_{1}\left(x_{1}, \ldots, x_{n^la}\right), \ldots, r_s\left(x_{1}, \ldots, x_{n^la}\right)$ are written as $\gamma_{G}\left(h_{1}\left(x_{j_1}, \ldots, x_{j_d}\right)\right), \ldots, \gamma_{G}\left(h_{s}\left(x_{j_1}, \ldots, x_{j_d}\right)\right)$ for some polynomials  $h_{1}\left(x_{j_1}, \ldots, x_{j_d}\right), \ldots, h_{s}\left(x_{j_1}, \ldots, x_{j_d}\right)$ of $d$-variables. 

Since $H_d$ is a complete system of representative of $G/ \operatorname{Stab}([d]) $, 
we obtain the following decomposition:
\begin{align*}
    G 
    = \bigcup_{g\in [G/\operatorname{Stab}_{G}([d])]}g\cdot \operatorname{Stab}_{G}([d])
    = \bigcup_{g\in H_d}g\cdot \operatorname{Stab}_{G}([d]).
\end{align*}
Then, this induces the decomposition of Reynolds operators; 
 \begin{align*}
\gamma_{G}= \gamma_{H_d}\circ \gamma_{\operatorname{Stab}([d])} : \mathbb{R}\left[x_{1}, . ., x_{n^la}\right] \to \mathbb{R}\left[x_{1}, . ., x_{n^la}\right]^{\operatorname{Stab}([d])} \rightarrow \mathbb{R}\left[x_{1}, \ldots, x_{n^la}\right]^{G},
 \end{align*}
 where $\mathbb{R}\left[x_{1}, . ., x_{n^la}\right], \mathbb{R}\left[x_{1}, . ., x_{n^la}\right]^{\operatorname{Stab}([d])} , \mathbb{R}\left[x_{1}, \ldots, x_{n^la}\right]^{G}$ are  the set of polynomials, $\operatorname{Stab}([d])$-invariant polynomials, invariant polynomials, respectively.
This implies 
\begin{align*}
r_i=\gamma_{G}\left(h_{i}\left(x_{j_1}, \ldots, x_{j_d}\right)\right)=\gamma_{H_d}\left( \gamma_{\operatorname{Stab}([d])}(h_{i}\left(x_{j_1}, \ldots, x_{j_d})\right)\right)=\gamma_{H_d}\left(h_{i}\left(x_{j_1}, \ldots, x_{j_d}\right)\right).
\end{align*}
Here the last inequality follows from the fact that $h_{i}\left(x_{j_1}, \ldots, x_{j_d}\right)$ is a $\operatorname{Stab}_{G}([d])$-invariant polynomial. On the other hand, note that the invariant Reynolds operator is equal to the composition of the equivariant Reynolds operator and the orbit sum; $\gamma_G =\Sigma \circ \tau_G$.
For the vector valued function $h=(h_1,..,h_s)$, by the universal approximation theorem of fully connected neural nets, we can take a fully connected neural net $\mathcal{Q}$ with which 
$\|\mathcal{Q}- h\|_K < \varepsilon$ holds.
Let $\mathcal{N}$ be a fully connected neural net that approximates $P$ above; $\|\mathcal{N}- P\|_K < \varepsilon$. Then 
\begin{align*}
\mathcal{N}  \circ \Sigma \circ  \tau_{H_d} \circ Q(x_1,..,x_n)
&\approx \mathcal{N} \circ \Sigma \circ \tau_{H_d}  (h_1(x_1,..,x_n),..,h_s(x_1,..,x_n)) \\
&\approx  \mathcal{N}(\gamma_G(h_1)(x_1,..,x_n),..,\gamma_G(h_s)(x_1,..,x_n)) \\
&=  \mathcal{N}(r_1(x_1,..,x_n),..,r_s(x_1,..,x_n))\\
&\approx  P(r_1(x_1,..,x_n),..,r_s(x_1,..,x_n))\\
&= \tilde{f}(x_1,..,x_n)\\
&\approx f(x_1,..,x_n).
\end{align*}
\hspace{\fill}$\blacksquare$

\section{Generalization Error}
Figure \ref{fig:generalization2} shows the detail version of only 4red-ReyNets at Figure \ref{fig:generalization}.
In figure 3 our results seems an almost horizontal straight line, but this figure shows that the MSE also slightly increases as $n$ increases.
\begin{figure}[h]
    \centering
  \begin{minipage}[b]{0.45\hsize}
    \centering
    \includegraphics[width=0.95\columnwidth]{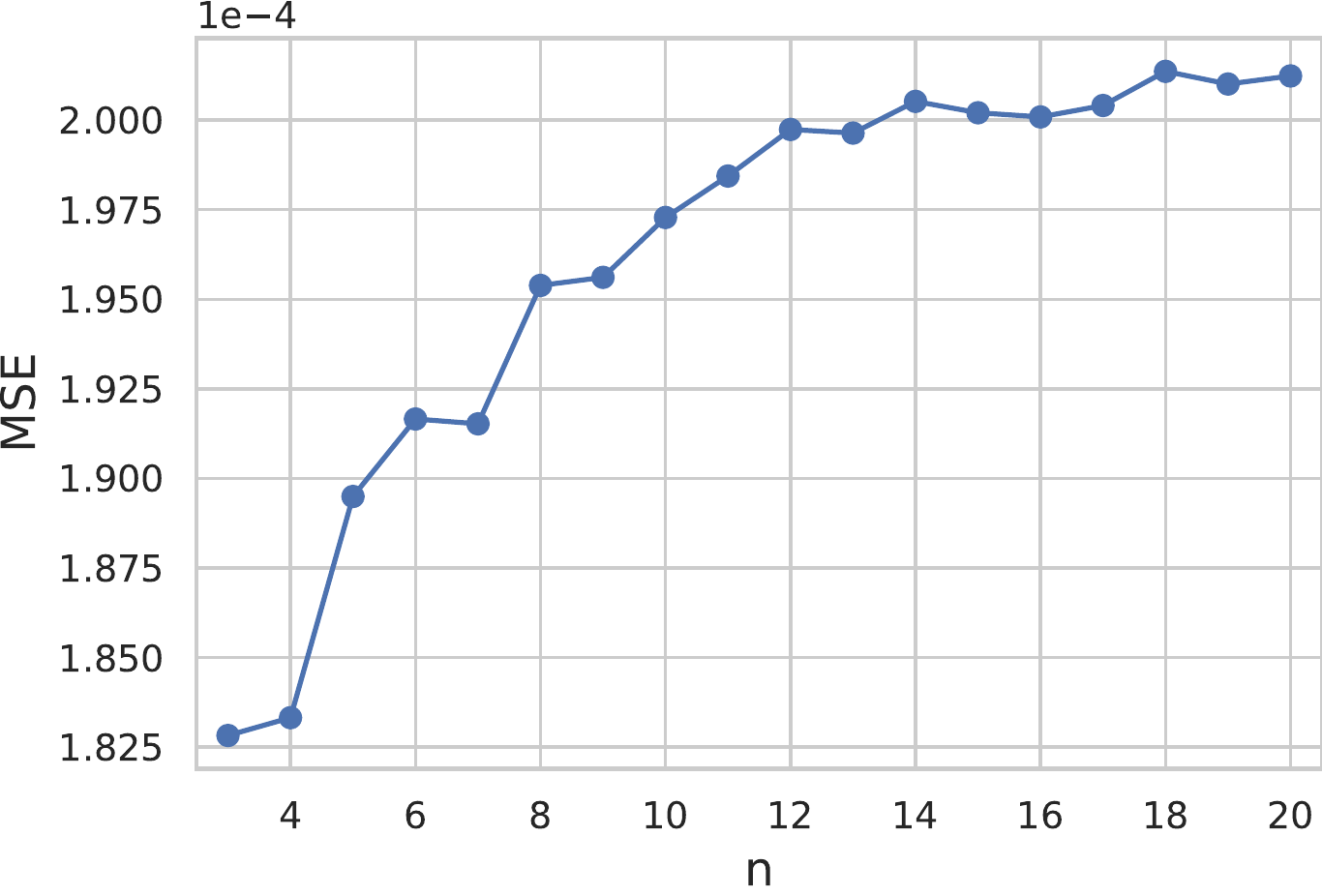}
    \subcaption{Symmetry}\label{gen-sym2}
  \end{minipage}
  \begin{minipage}[b]{0.45\hsize}
    \centering
    \includegraphics[width=0.95\columnwidth]{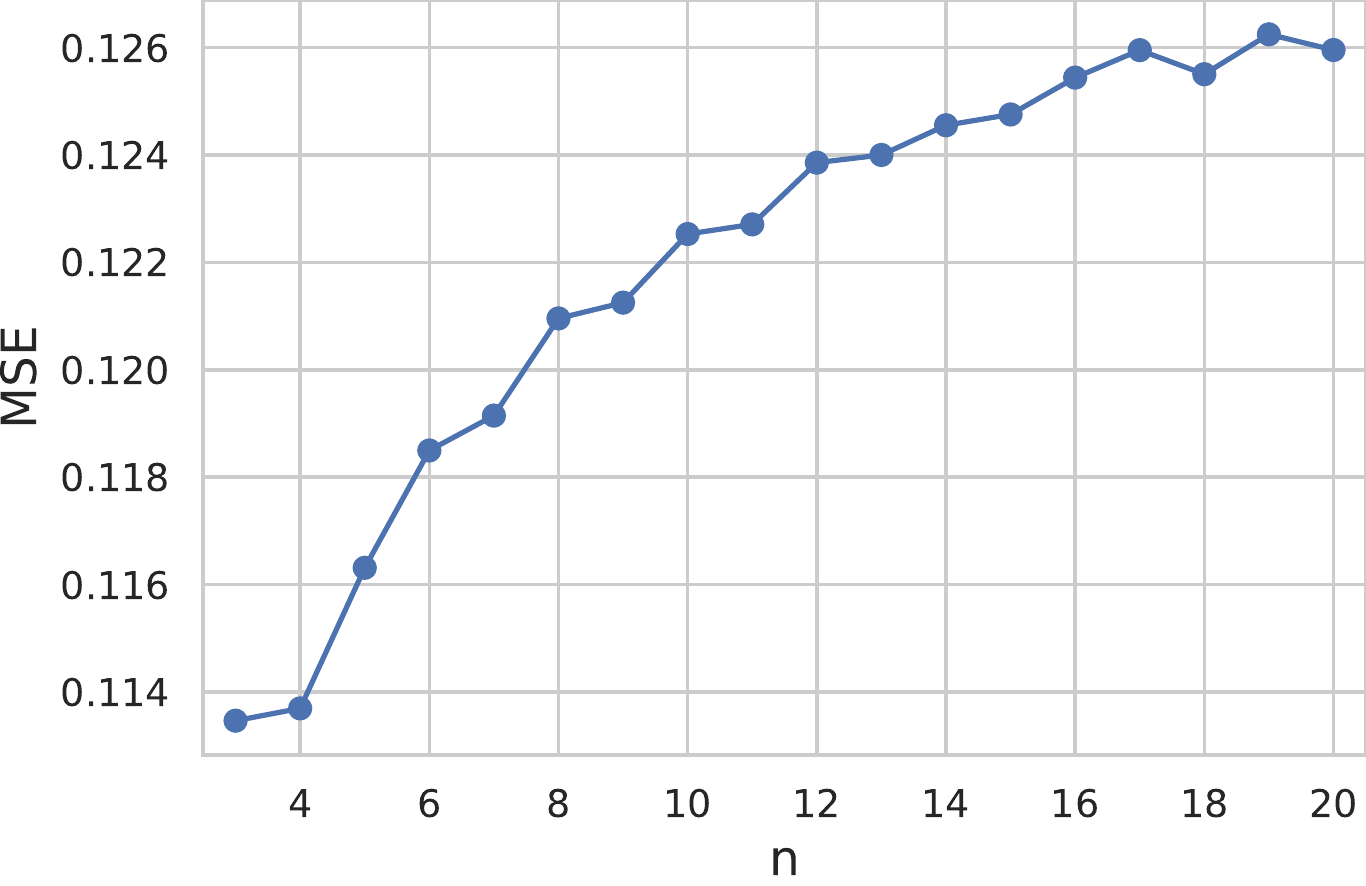}
    \subcaption{Power}\label{gen-diag2}
  \end{minipage}
  \caption{Generalization Error}\label{fig:generalization2}
\end{figure}

\end{document}